\newtheorem{theorem}{Theorem}
\newtheorem{lemma}{Lemma}
\newcommand{\trace}{{\rm trace}}
\newcommand{\be}{\begin{equation}}
\newcommand{\ee}{\end{equation}}
\newcommand{\mR}{{\mathbb R}}
\begin{document}
\title{Optimal transport for vector Gaussian mixture models}
\author{Jiening Zhu, Kaiming Xu, Allen Tannenbaum
\thanks{J.\ Zhu and K.\ Xu are with the Department of Applied Mathematics \& Statistics, Stony Brook University, NY; email: jiening.zhu@stonybrook.edu, kaiming.xu@stonybrook.edu}
\thanks{A.\ Tannenbaum is with the Departments of Computer Science and Applied Mathematics \& Statistics, Stony Brook University, NY; email: allen.tannenbaum@stonybrook.edu}}
\date{\today}
\maketitle
\thispagestyle{fancy}
      \lhead{}
      \chead{}
      \rhead{}
      \lfoot{}
      \rfoot{}
      \cfoot{\thepage}
      \renewcommand{\headrulewidth}{0pt}
      \renewcommand{\footrulewidth}{0pt}
\pagestyle{fancy}
\cfoot{\thepage}
\begin{abstract}
Vector-valued Gaussian mixtures form an important special subset of vector-valued distributions. In general, vector-valued distributions constitute natural representations for physical entities, which can mutate or transit among alternative manifestations distributed in a given space. A key example is color imagery. In this note, we vectorize the Gaussian mixture model and study several different optimal mass transport related problems associated to such models. The benefits of using vector Gaussian mixture for optimal mass transport include computational efficiency and the ability to preserve structure.

\textit{This paper is dedicated to our dear friend and colleague, Professor Peter Olver, on the occasion of his 70th birthday. Happy Birthday, Peter!}
\end{abstract}

\section{Introduction}
Finite mixture models can describe a wide range of statistical phenomena. They have been successfully applied to numerous fields including biology, economics, engineering, and the social sciences \cite{McLachlan2000}. The first major use and analysis of mixture models is perhaps due to the mathematician and biostatistician Karl Pearson over 120 years ago, who explicitly decomposed a distribution into two normal distributions for the characterization of the non-normal attributes of forehead to body length ratios in female shore crab populations \cite{October1894}. The literature on analyzing and applying mixture models is growing due to their simplicity, versatility and flexibility. One of the most commonly used mixture models is the Gaussian mixture model (GMM), which is a weighted sum of Gaussian distributions.

Optimal mass transport (OMT) has been a major subject of mathematical research, originating with the French civil engineer and mathematician Gaspard Monge in 1781 \cite{villani1, villani2}. OMT allows one to define a distance between two probability distributions, which makes it a very powerful tool to analyze the geometry of distributions. Its applications include but not limited to signal processing, machine learning, computer vision, meteorology, statistical physics, quantum mechanics, and network theory \cite{Arjovsky2017,Haker2004,rr,Statement2020}. Milestones of this subject include the seminal work of Leonid Kantorovich \cite{villani1, villani2}, who relaxed the original problem so that it can be solved through linear programming, and Benamou and Brenier \cite{BB} who introduced a computational fluid dynamics (CFD) approach to OMT. More recent developments involve extensions of the theory to the vector-valued, matrix-valued and unbalanced cases \cite{vectorvalued,CheGeoTan16,CheGeoTan16b,Chizat}.

The problem that motivated the present work arose when the authors were working with certain medical image data. The object was to compute optimal mass transport while preserving key structures. The authors of \cite{Chen2019} studied OMT for GMM, which however can only work on single layered data, e.g., gray scale images. The need for working directly on the original color images with the potential of capturing more information inspired us to generalize the OMT setting from the one-layered case to the three-layered case. More generally, in this note, we develop optimal transport for vector-valued Gaussian mixture models, which can have any dimension and any general connection structures among the layers. Furthermore, corresponding to unbalanced OMT, we also develop an unbalanced version for Gaussian mixture models.

There have several relevant works in the literature describing various versions of OMT to GMMs and vector-valued data as well as extending the theory to manifolds, which we would like to review here in order to put the present work in proper perspective.
First of all, Fitschen, Laus, and Schmitzer \cite{schmitzer_manifold} develop a rigorous transport theory for manifold-valued images. Delon and Desolneux \cite{delon_gmm} study a version of OMT for GMMs (with some beautiful examples), essentially equivalent to the work proposed in \cite{Chen2019} and followed in the present work. Fitschen, Laus and Steidl \cite{fitschen_rgb} formulate a dynamical model of transport for discrete RGB color images inspired by the work Benamou-Brenier \cite{BB}. In the work of Thorpe \emph{et al.} \cite{thorpe}, a transport-based distance is defined and studied, which is directly applicable to general, non-positive and multi-channel signals.


In what follows, we will first give some background on GMM and OMT. Next, we summarize some of the work of \cite{Chen2019}, and then introduce two different approaches for the vector-valued case. We investigate the unbalanced GMM problem and conclude with some illustrative numerical results.



\section{Gaussian mixture models}
A Gaussian mixture model is one of the most important examples of a mixture model. Mathematically, a GMM is a probability distribution which is the weighted sum of several Gaussian distributions in $\mR^N$. Namely, an \emph{$n$-component Gaussian mixture model (GMM)} is given by
\begin{equation}\label{gmm}
  \mu=p_1\nu_1+p_2\nu_2+\cdots+p_n\nu_n.
\end{equation}
Here
\begin{equation}\label{gaussian function}
  \nu_i(x)=\frac{1}{\sqrt{(2\pi)^{N}|\Sigma_i|}}\exp\{-\frac{1}{2}(x-m_i)^T\Sigma_i^{-1}(x-m_i)\},
\end{equation}
where $m_i\in\mR^N$ is the mean and $\Sigma_i\in\mR^{N\times N}$ is the positive definite covariance matrix for $1\le i\le n$. Further,
\begin{equation}\label{probability1}
   \sum_{i=1}^{n}p_i=1,\quad p_i>0, \forall i\in \{1,...,n\}
\end{equation}
so that $\mu$ is a probability distribution.

We denote the set of all the GMMs in $\mR^N$ by $\mathcal{G}(\mR^N)$. It is a dense subset of the set of all the probability distributions in the sense of the weak$*$ topology \cite{Stergiopoulos_2000}. Thus one can use GMM to fit a distribution with arbitrarily small error. Of course, this may involve a very large number of Gaussians.

\section{Optimal mass transport}
In this section we sketch the basics of optimal mass transport. See \cite{villani1, villani2} for all the details as well as an extensive list of references. In the present work,  we only consider absolutely continuous measures, which thus have  density functions representations. By slight abuse of notation and terminology, we will identify the given measure with its density function representation.

The original formulation of OMT due to Gaspard Monge may be expressed as follows:
\begin{equation}
\label{1}
  \inf_T\{\int_{E}c(x,T(x))\rho_0(x)dx\ |\ T_{\#}\rho_0=\rho_1\},
\end{equation}
where $c(x,y)$ is the cost of moving unit mass from $x$ to $y$, which is a lower semi-continuous and bounded below, $T$ is the transport map, and $\rho_0,\rho_1$ are two probability distributions defined on $E$, a subdomain of $\mathbb{R}^n$. $T_\#$ denotes the push-forward of $T$ of corresponding measures of the distributions.



As pioneered by Leonid Kantorovich, the Monge formulation of OMT may be relaxed replacing transport maps $T$ by couplings $\pi$:
\begin{equation}
  \inf_{\pi\in\Pi(\rho_0,\rho_1)}\int_{E\times E}c(x,y)\pi(dx,dy),
\end{equation}
where $\Pi(\rho_0,\rho_1)$ denotes the set of all the couplings between $\rho_0$ and $\rho_1$ (joint distributions whose marginal distributions are $\rho_0$ and $\rho_1$).

The discrete Kantorovich form may be written as follows:
\begin{equation}
  \min_{\pi\in\Pi(\rho_0,\rho_1)}\sum_{i}\sum_{j}c(i,j)\pi(i,j),
\end{equation}
where $\rho_0\in\mathbb{R}_{+}^m,\rho_1\in\mathbb{R}_{+}^n$ are two discrete probability density functions ($\sum_i^m\rho_0(i)=\sum_j^n\rho_1(j)=1$), $\Pi(\rho_0,\rho_1)$ is the set of matrices $\{\pi\in\mR_+^{m\times n}|\pi \vec{1}_n=\rho_0,\ \pi^T \vec{1}_m=\rho_1\}$, and $\vec{1}_m$ and $\vec{1}_n$ are vectors all 1's of length $m$ and $n$, respectively. $c(\cdot,\cdot)$ is a discrete cost function. Kantorovich form is guaranteed to have a optimal solution ($\rho_0\otimes\rho_1^T\in\Pi(\rho_0,\rho_1)$) while in some cases Monge form might admit no feasible solution.  

One may show that for $c(x,y) =\|x-y\|^2$ (square of distance function), the Kantorovich and Monge formulations are equivalent in the absolutely continuous measure case; see \cite{villani1, villani2} and the references therein. Moreover for $c(x,y)=||x-y||^2$, the specific infimum is called \emph{Wasserstein-2 distance} ($\mathcal{W}_2$).

\section{Optimal mass transport for Gaussian mixture models}
We are interested in looking at optimal interpolation paths from GMM to another, that is geodesic paths in the space of probability distributions \cite{Otto}. The problem is that for general GMMs with more than one summands, the optimal path goes out of the subspace of GMMs, that is, the GMM structure is lost. This was exactly the motivation underlying the work of \cite{Chen2019}. There are several advantages of preserving the GMM structure including greatly saving computational cost via dimension reduction.

\subsection{OMT between Gaussian distributions}
For two Gaussian distributions $\mu_i,\ i=0,1$ whose means and covariances are $m_i$ and $\Sigma_i$, respectively, it is well-known \cite{villani1,villani2} that the $\mathcal{W}_2$ distance between $\mu_0$ and $\mu_1$ has a closed form solution:
\begin{equation}\label{W2gaussian}
  \mathcal{W}_2(\mu_0,\mu_1)^2=||m_0-m_1||^2+\trace(\Sigma_0+\Sigma_1-2(\Sigma_0^{1/2}\Sigma_1\Sigma_0^{1/2})^{1/2}).
\end{equation}
For each $t\in[0,1]$, the distribution $\mu_t$ on the geodesic path is a Gaussian whose mean and covariance matrix are defined as follows:
\begin{align}\label{gaussian_geodesic}
  &m_t=(1-t)m_0+tm_1 \\
  &\Sigma_t=\Sigma_0^{-1/2}((1-t)\Sigma_0+t(\Sigma_0^{1/2}\Sigma_1\Sigma_0^{1/2})^{1/2})^2\Sigma_0^{-1/2}.
\end{align}

\subsection{OMT between GMMs}
Let $\mu_0,\mu_1$ be two Gaussian mixture models of the form
\begin{equation*}
  \mu_i=p_i^1\nu_i^1+p_i^2\nu_i^2+\cdots+p_i^{n_i}\nu_i^{n_i},\ i=0,1.
\end{equation*}
Following \cite{Chen2019,delon_gmm}, the distance between $\mu_0,\mu_1$ is defined as
\begin{equation}\label{gmm kantorovich}
  d(\mu_0,\mu_1)^2=\min_{\pi\in\Pi(p_0,p_1)}\sum_{i,j}c(i,j)\pi(i,j),
\end{equation}
where
\begin{equation}\label{gmm_c}
  c(i,j)=\mathcal{W}_2(\nu_0^i,\nu_1^j)^2.
\end{equation}
As $\nu_0^i$ and $\nu_1^j$ are Gaussian distributions, the $\mathcal{W}_2$ distance may be computed as in (\ref{W2gaussian}). In \cite{Chen2019,delon_gmm}, it is proven that $d(\cdot,\cdot)$ is indeed a metric on $\mathcal{G}(\mR^N)$.
Further, the geodesic on $\mathcal{G}(\mR^N)$ connecting $\mu_0$ and $\mu_1$ is given by
\begin{equation}\label{mu_t}
  \mu_t=\sum_{i,j}\pi^*(i,j)\nu_t^{ij},
\end{equation}
where $\nu_t^{ij}$ is the displacement interpolation in (\ref{gaussian_geodesic}) between $\nu_0^i$ and $\nu_1^j$. $\pi^*(\cdot,\cdot)$ is the optimal solution of (\ref{gmm kantorovich}).


\section{Vector-valued GMM}
In this section, we extend the definition of GMM to the vector-valued case, based on which we will formulate generalizations of the work of \cite{Chen2019}.

\subsection{Vector-valued distributions}
A \emph{vector-valued distribution} has a corresponding  density function which is vector-valued. Formally, a \emph{vector-valued distribution},  $\rho=[\rho_1,...,\rho_M]$ on $\mR^N$, is a map from $\mR^N$ to $\mR^M_+$ such that
	\[
		\sum_{i=1}^M\int_{\mR^N} \rho_i(x)dx=1, 
	\]
with the connections among its $M$ channels, defined by a connected graph $G=(V,E),$ which has $M$ nodes and whose edges determine the connections. Thus, $\rho$ may be considered as a general distribution on $\mR^N\times G$, where $V=\{1,2,\cdots,M\}$ with $E$ defining the connections among the channels (layers). As described in \cite{vectorvalued}, it may represent a physical entity that may mutate or be transported among several alternative manifestations with certain relationships among its $M$ channels.

The Euclidean structure of $\mR^N$ and graph structure of $E$ together give a complete metric structure for $\mR^N\times G$, 
\begin{equation*}
    d^p((x,u),(y,w))=||x-y||^p+\gamma d_G^p(u,w),
\end{equation*}
where $(x,u),(y,w)\in \mR^N\times G$, are two points in the space, $p>0$, $||\cdot||$ is the norm of $\mR^N$ and $d_G(\cdot,\cdot)$ is the graph distance which is defined as the length of shortest path on $G$. The vector-valued OMT problem deals with transport on such a metric space. 

\begin{figure}[H]
  \centering
  \includegraphics[width=0.7\linewidth]{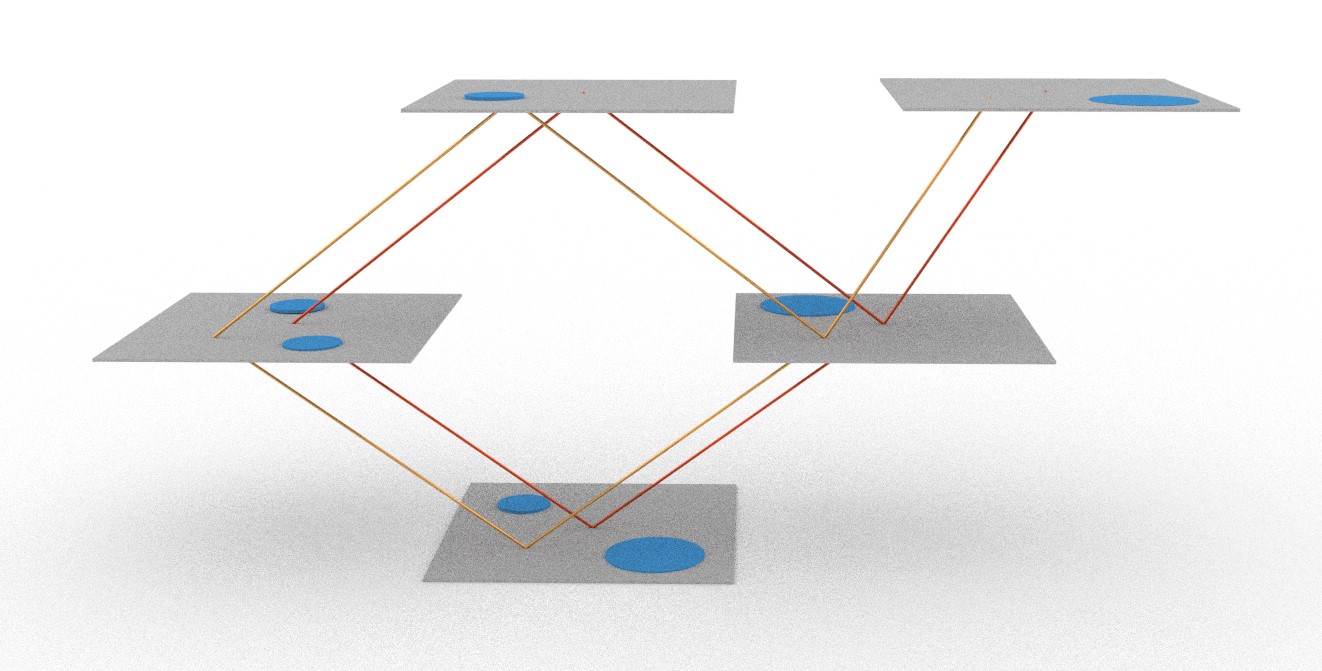}
  \caption{An example of a vector-valued distribution in $\mR^2$ with 5 channels and a specific graph structure. The distribution takes values only within each channel and its total sum is 1.}\label{vector-valued distribution}
\end{figure}

\subsection{Vector GMMs as a subset of vector-valued distributions}
{\em Vector-valued GMMs} are those vector-valued distributions such that the distribution in each layer is a weighted sum of Gaussians and the weights of the Gaussians sum up to 1. Formally,
\begin{equation}\label{vector gmm}
  \rho=p^1\nu^1\vec{\delta}_{q^1}+p^2\nu^2\vec{\delta}_{q^2}+\cdots+p^{n}\nu^{n}\vec{\delta}_{q^n},
\end{equation}
where $\vec{\delta}_k$ is a column vector which is the $k^{\rm th}$ column of the $M$ by $M$ identity matrix and $q^i$ is the index of channel where the $i^{\rm th}$ Gaussian lies in. We will always assume that the latter is a probability distribution, i.e.,
\begin{equation}
  \sum_{i=1}^{n}p^i=1.
\end{equation}

\section{Generalization of the OMT GMM framework  to vector-valued GMMs}\label{omttovecgmm}
Consider two vector-valued GMMs $\rho_0$ and $\rho_1$:
\begin{align*}
  \rho_i=&p_i^1\nu_i^1\vec{\delta}_{q_i^1}+p_i^2\nu_i^2\vec{\delta}_{q_i^2}+\cdots+p_i^{n_i}\nu_i^{n_i}\vec{\delta}_{q_i^{n_i}},\ i=0,1.
\end{align*}
We want to compute an OMT based distance and a displacement interpolation between these two vector-valued distributions with the requirement that the vector GMM structure is preserved along the interpolation path. In short, we want to construct the analogous framework of \cite{Chen2019}, but replace scalar-valued GMMs with vector-valued GMMs.

As above, let $\Pi(p_0,p_1)$ denote the set of joint probabilities with given marginals $p_0$ and $p_1$. Given a graph structure, the most straightforward approach is to require only certain parts of $\Pi$ to be nonzero, namely only when the the source and target Gaussians are in the same channel or when they are located in adjacent channels. A more detailed description is given in Appendix~\ref{modify_PI}.

\textbf{\emph{Unfortunately, this natural (and perhaps most straightforward) generalization may not admit a solution.}} Indeed, the newly added constraints on $\Pi$ may not work for general graph structures. Thus, the only other choice left in (\ref{gmm kantorovich}) is to modify the cost matrix $c(\cdot,\cdot)$.

Accordingly, we now propose two different approaches. Both of them modify the cost matrix $c(\cdot,\cdot),$ but from very different points of view.

\section{Approach 1: modify cost matrix}\label{another approach}
The first approach is based on the following intuition. Since we know how to deal with OMT on $\mR^n$, and noting that it is also easy to compute the shortest path on a connected graph using, e.g., the Bellman-Ford \cite{10.2307/43634538,Ford1956NETWORKFT} or Dijkstra algorithm \cite{Dijkstra1959}, our idea is to consider $\mR^n$ and $G$ separately, and combine them in the Kantorovich step (\ref{gmm kantorovich}) as a new cost term.

More precisely, as the one-channel GMM case, utilizing the Kantorovich step (\ref{gmm kantorovich}), we take
\begin{equation}\label{c1}
    c_1(i,j)=\mathcal{W}_2(\nu_0^i,\nu_1^j)+\gamma\tilde{d}_G(q_0^i,q_1^j),
\end{equation}
where $\tilde{d}_G(\cdot,\cdot)$ is the length of the shortest path on the graph $G$ and $\gamma$ is a weight parameter to control how much we allow the cross-channel transport.

With our new $c_1(i,j)$ defined, we can compute the distance as
\begin{equation}\label{vector gmm kantorovich}
  d_{V_1}(\rho_0,\rho_1)=\min_{\pi\in\Pi(p_0,p_1)}\sum_{i,j}c_1(i,j)\pi(i,j).
\end{equation}
As in the one-channel case, we can prove $d_{V_1}(\cdot,\cdot)$ is indeed a metric on $\mathcal{G}(\mR^N\times G)$.
\begin{theorem}\label{metric}
$d_{V_1}(\cdot,\cdot)$ defines a metric on $\mathcal{G}(\mR^N\times G)$
\end{theorem}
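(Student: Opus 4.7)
The plan is to verify the four metric axioms for $d_{V_1}$, following the same blueprint as the scalar GMM argument in \cite{Chen2019}. A convenient simplification in the present setting is that the cost $c_1$ is itself a metric on the product space $\mathcal{G}(\mR^N) \times G$, being the sum of the two metrics $\mathcal{W}_2$ and $\gamma\,\tilde{d}_G$; by contrast, in \cite{Chen2019} the cost is a squared metric, which forced an additional Minkowski-style manipulation. This is what I would exploit to streamline the argument.

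Non-negativity is immediate from $c_1, \pi \geq 0$, and symmetry follows because $c_1(i,j)$ is symmetric in its two arguments and transposition gives a bijection $\Pi(p_0,p_1) \to \Pi(p_1,p_0)$ preserving the objective. For the identity of indiscernibles, the vanishing direction uses the diagonal coupling $\pi(i,j) = p^i \delta_{ij}$, for which each diagonal cost $c_1(i,i) = \mathcal{W}_2(\nu^i,\nu^i) + \gamma\,\tilde{d}_G(q^i,q^i)$ vanishes. For the converse, I would take any minimizer $\pi^*$; if $d_{V_1}(\rho_0,\rho_1)=0$ then $c_1(i,j) = 0$ on the support of $\pi^*$, which forces both $\nu_0^i = \nu_1^j$ and $q_0^i = q_1^j$ there. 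Reading off the marginal constraints $\pi^*\vec{1} = p_0$ and $(\pi^*)^T\vec{1} = p_1$ then shows that $\rho_0$ and $\rho_1$ agree as vector-valued measures on $\mR^N \times G$ after merging identical Gaussian-channel terms.

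The main work is the triangle inequality. Given $\rho_0, \rho_1, \rho_2$ with optimal couplings $\pi^*_{01}$ and $\pi^*_{12}$ for the two pairwise distances, I would invoke the discrete gluing lemma by setting
\begin{equation*}
\pi_{012}(i,j,k) = \frac{\pi^*_{01}(i,j)\,\pi^*_{12}(j,k)}{p_1^j},
\end{equation*}
with the convention that this expression is $0$ whenever $p_1^j = 0$ (consistent because both numerator factors then also vanish), and then defining $\pi_{02}(i,k) = \sum_j \pi_{012}(i,j,k)$. A routine marginalization check gives $\pi_{02} \in \Pi(p_0, p_2)$. Since $c_1$ satisfies its own pointwise triangle inequality, I then sum the bound $c_1(i,k) \leq c_1(i,j) + c_1(j,k)$ weighted by $\pi_{012}$ and marginalize over $(j,k)$ and over $(i,j)$ in the two resulting terms to obtain
\begin{equation*}
d_{V_1}(\rho_0,\rho_2) \leq \sum_{i,j,k} c_1(i,k)\,\pi_{012}(i,j,k) \leq d_{V_1}(\rho_0,\rho_1) + d_{V_1}(\rho_1,\rho_2).
\end{equation*}
I do not anticipate a genuine obstacle here; the only mild care needed is the $p_1^j = 0$ bookkeeping in the gluing formula, and the scalar argument of \cite{Chen2019} transfers essentially verbatim once one has recognized that $c_1$ is a metric.
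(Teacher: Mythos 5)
Your proposal is correct and follows essentially the same route as the paper's own proof: the same gluing construction $\pi_{02}(i,k)=\sum_j \pi_{01}(i,j)\pi_{12}(j,k)/p_1^j$, the same pointwise triangle inequality for the cost $c_1$ as a sum of the two metrics $\mathcal{W}_2$ and $\gamma\tilde{d}_G$, and the same direct marginalization (the paper likewise exploits that no Minkowski-type step is needed here, unlike for the squared cost $c_2$). Your treatment of the remaining axioms and of the $p_1^j=0$ case is merely more explicit than the paper's, which dispatches them with a ``clearly.''
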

\begin{proof}
See Appendix~\ref{new distance}.
\end{proof}

The \emph{displacement interpolation} may be defined as follows:
\begin{equation}\label{vector geodesic}
  \rho_t=\sum_{i,j}\tilde{\pi}_1^*(i,j)\nu_t^{ij}\vec{\delta}_{path_G(q_0^i,q_1^j,t)},
\end{equation}
where $\tilde{\pi}_1^*(\cdot,\cdot)$ denotes the optimal solution of (\ref{vector gmm kantorovich}), $\nu_t^{ij}$ is the displacement interpolation in (\ref{gaussian_geodesic}) between $\nu_0^i$, and $\nu_1^j$, $path_G(\cdot,\cdot,t)$ is the path interpolation between two nodes of $G$. The value of $\vec{\delta}$ between two nodes is taken as the weighted sum of the two $\vec{\delta}$ vectors of the two nodes.

As a concrete example, suppose that the shortest path between the two nodes $node_1$ and $node_2$ is given by $node_1\rightarrow node_3\rightarrow node_4\rightarrow node_2$. Then at $t=0.5$ the interpolation point lies exactly in the middle of points 3 and 4, which is to say $path_G(node_1,node_2,0.5)=0.5*node_3+0.5*node_4$. Further, $\vec{\delta}_{path_G(node_1,node_2,0.5)}=\vec{\delta}_{0.5*node_3+0.5*node_4}=0.5*\vec{\delta}_3+0.5*\vec{\delta}_4$.

\begin{theorem}\label{geo}
The displacement interpolation (\ref{vector geodesic}) is a geodesic in the sense of $d_{V_1}(\cdot,\cdot)$
\end{theorem}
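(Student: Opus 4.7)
The plan is to combine three ingredients: (i) the Gaussian displacement interpolation $\nu_t^{ij}$ is a constant-speed $\mathcal{W}_2$-geodesic, (ii) the shortest-path interpolation on $G$ behaves affinely in $t$, and (iii) $d_{V_1}$ already satisfies the triangle inequality by Theorem~\ref{metric}. The strategy is the classical one in optimal transport: construct an explicit admissible coupling between $\rho_0$ and $\rho_t$ whose transportation cost is \emph{exactly} $t\,d_{V_1}(\rho_0,\rho_1)$, do the symmetric construction between $\rho_t$ and $\rho_1$, and close the argument by sandwiching with the triangle inequality.

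Before addressing distances I would first re-express $\rho_t$ as a proper element of $\mathcal{G}(\mR^N\times G)$. For each pair $(i,j)$ let $a_{ij}(t)$ and $b_{ij}(t)$ denote the two consecutive nodes of a fixed shortest $q_0^i$-$q_1^j$ path between which the point at fractional arclength $t$ falls, and let $s_{ij}(t)\in[0,1]$ be the corresponding convex weight. By the convention spelled out after (\ref{vector geodesic}), each summand of $\rho_t$ then splits into two honest atoms living on actual channels $a_{ij}(t)$ and $b_{ij}(t)$, with weights $\tilde{\pi}_1^*(i,j)s_{ij}(t)$ and $\tilde{\pi}_1^*(i,j)(1-s_{ij}(t))$ respectively.

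Next I would define the coupling that sends the $i$-th atom of $\rho_0$ to these two sub-atoms with the same weights. Marginal consistency on the $\rho_0$ side reduces to $\sum_j\tilde{\pi}_1^*(i,j)=p_0^i$, and on the $\rho_t$ side it is immediate. The induced cost under $c_1$ factors into a Gaussian part and a graph part: the Gaussian part contracts by the constant-speed identity $\mathcal{W}_2(\nu_0^i,\nu_t^{ij})=t\,\mathcal{W}_2(\nu_0^i,\nu_1^j)$ from (\ref{gaussian_geodesic}), while the graph part contracts via $s_{ij}(t)\tilde{d}_G(q_0^i,a_{ij}(t))+(1-s_{ij}(t))\tilde{d}_G(q_0^i,b_{ij}(t))=t\,\tilde{d}_G(q_0^i,q_1^j)$, provided $a_{ij}(t),b_{ij}(t)$ are consecutive nodes on a shortest path. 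Summing over $(i,j)$ yields $d_{V_1}(\rho_0,\rho_t)\leq t\,d_{V_1}(\rho_0,\rho_1)$, and the mirror argument gives $d_{V_1}(\rho_t,\rho_1)\leq(1-t)\,d_{V_1}(\rho_0,\rho_1)$.

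Finally, the triangle inequality from Theorem~\ref{metric} forces both displayed inequalities to be equalities, and repeating the argument on an arbitrary subinterval $[s,t]\subseteq[0,1]$ (restricting $\tilde{\pi}_1^*$ in the obvious way) upgrades this to $d_{V_1}(\rho_s,\rho_t)=(t-s)\,d_{V_1}(\rho_0,\rho_1)$, which is the geodesic property. I expect the main obstacle to be the graph-geometric identity on the shortest-path distances above, in particular justifying that the prefix lengths $\tilde{d}_G(q_0^i,a_{ij}(t))$ and $\tilde{d}_G(q_0^i,b_{ij}(t))$ are actually realized by the same shortest path used to define the interpolation, and that no detour off that path shortens them. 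This is the standard fact that every subpath of a shortest path is itself shortest, but it is the only nontrivial book-keeping step; everything else is routine linear manipulation of the coupling.
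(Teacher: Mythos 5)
Your proposal is correct and is essentially the paper's own argument (Appendix~\ref{geodesic_proof}): both split each $(i,j)$ summand of $\rho_t$ into atoms on the two consecutive nodes of the chosen shortest path, couple these with the corresponding atoms at the other time, use the constant-speed identities $\mathcal{W}_2(\nu_s^{ij},\nu_t^{ij})=(t-s)\mathcal{W}_2(\nu_0^i,\nu_1^j)$ and the affine behaviour of the prefix distances along the shortest path to obtain $d_{V_1}(\rho_s,\rho_t)\le(t-s)\,d_{V_1}(\rho_0,\rho_1)$, and then force equality with the triangle-inequality sandwich from Theorem~\ref{metric}. The only cosmetic difference is that the paper treats a general subinterval $[s,t]$ in one pass (with the bookkeeping $r+a-b=(t-s)\bar{d}$) rather than first doing $[0,t]$ and $[t,1]$ and then extending.
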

\begin{proof}
See Appendix~\ref{geodesic_proof}.
\end{proof}

\section{Approach 2: Continuous version of $\mathcal{G}(\mR^N\times G)$}
For vector-valued GMMs, one cannot directly apply the same OMT framework as in the scalar case \cite{Chen2019} since the last index is taken discretely. Thus we will generalize the framework by making the last index continuous as well. The basic idea is to consider a continuous problem and view the vector-valued distribution as a projection of the continuous solution onto the original discrete space.

More precisely, we propose to extend each point on the edges of the given graph $G$ instead of only taking values on vertices of the graph. Moreover, we extend each edge half-way from both ends, so that newly added points are centered at the original vertices of the graph. Thus, we consider the following point set of a continuous version of the graph $G$:
\begin{equation}
    G^{c}=\{u+t (w-u)|u,w\in V(G),u\sim w, t\in[-0.5,0.5]\}.
\end{equation}
Here $u,w\in V(G)$ are taken as abstract vertices, not as integers. In addition, we assign a length to each edge, $\gamma$, so that we are able to perform integration on that set. (We may consider the use of nonuniform edge lengths, in case we are given specific edge weights.)

So the continuous structure we are going to employ is $\mR^N\times G^c$. Though it is possible to realize this whole structure in a Euclidean space $\mR^{N+K}$, one finds an unnecessarily complicated topology space in general, as the two examples shown in Figures~\ref{fig.rgb}, \ref{fig.Klein} indicate.

\begin{figure}[H]
    \centering
    \includegraphics[width=0.7\linewidth]{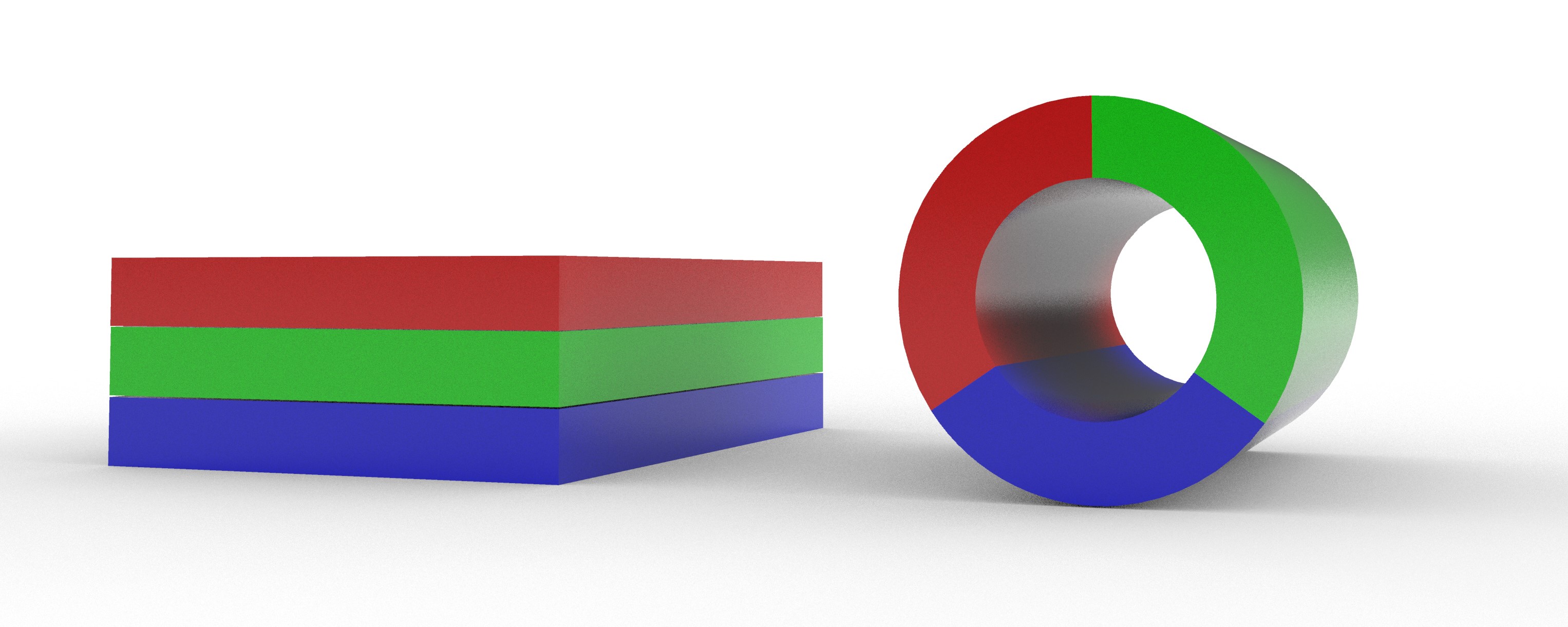}
    \caption{Consider a 3-vector distribution in $\mR^2$ (may be used for color image data). There are essentially two ways to connect the R,G and B channels. If the graph is given by $R\rightarrow G\rightarrow B,$ then its continuous version becomes a cube. On the other hand if additionally, R is also connected to B, then the continuous version is essentially a hollow cylinder, which is already a rather complicated space to employ in an OMT framework.}\label{fig.rgb}
\end{figure}

\begin{figure}[H]
    \centering
    \includegraphics[width=0.7\linewidth]{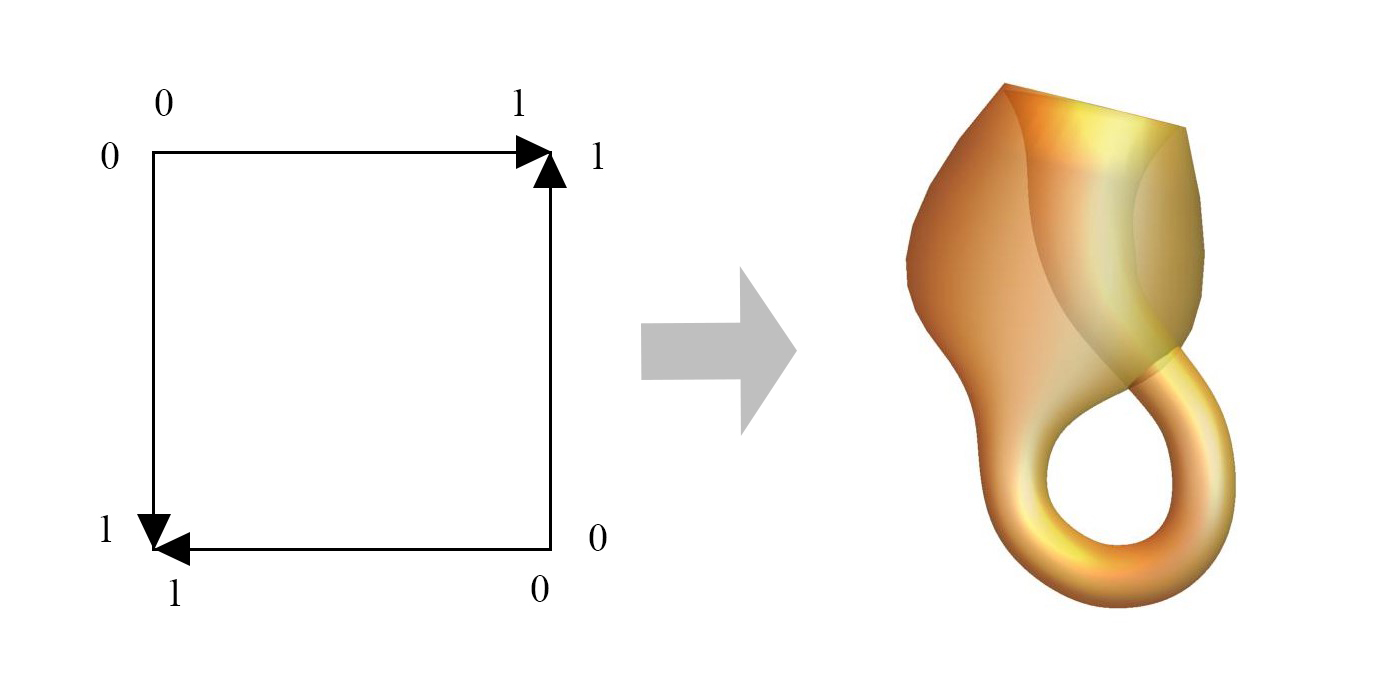}
    \caption{Consider a 4-vector distribution in $\mR$ given by four 1D edges ($[0,1]$) with each edge connected to the other three edges making a Klein bottle \cite{massey} with a hole on its surface, illustrated by the shape shown on the right. Computing optimal transport on its surface in general would be quite challenging.}\label{fig.Klein}
\end{figure}

In fact, we do not need to realize the global structure of the complicated space $\mR^N\times G^c$ as a whole. Instead, we can just consider the local structure. A natural and simple way to do that is to impose a manifold structure, which we will now elucidate. We denote the manifold by $\mathcal{M}$.

In order to define $\mathcal{M}$, we need to specify its atlas:
\begin{equation}
    A=\{\mR^N\times p| p\in [G^c]_0 \},
\end{equation}
where $[G^c]_0$ is a subset of all continuous paths on $G^c$ which have no cycles (no recurring vertices of $G$ on the path). We can characterize the charts as we stack layers (like ``bricks''), where we follow the order of the path on $G^c$. It is clear that each $p$ is homeomorphic to $\mR$, so that each chart is homeomorphic to $\mR^{N+1}$.

We want to define a distribution on $\mathcal{M}$ in such a manner that the original distribution is the projection of each layer's range. The projection is defined as the integral of the last index:
\begin{equation}
    P_u(f(x,z))=\sum_{w\sim u}\int_{-0.5}^{0.5}f(x,u+t(w-u))dt,
\end{equation}
where $P_u(\cdot)$ is the projection of the range of layer $u$, and $f$ is a distribution on the manifold. The integral range of the last index is the intersection of a ball centered at $u$ which has half-edge radius with $G^c$  (layer $u$'s range). Note that for different $w$'s which are connected to $u$, the ranges are like different orbits centered at $u$.

One of the simplest choices for lifting the original distribution to the manifold is a ``Gaussian cylinder,'' i.e., a product of a Gaussian distribution and a uniform distribution within the range of the layer. Thus, we accordingly thicken each Gaussian.

\begin{figure}[H]
    \centering
    \includegraphics[width=0.8\linewidth]{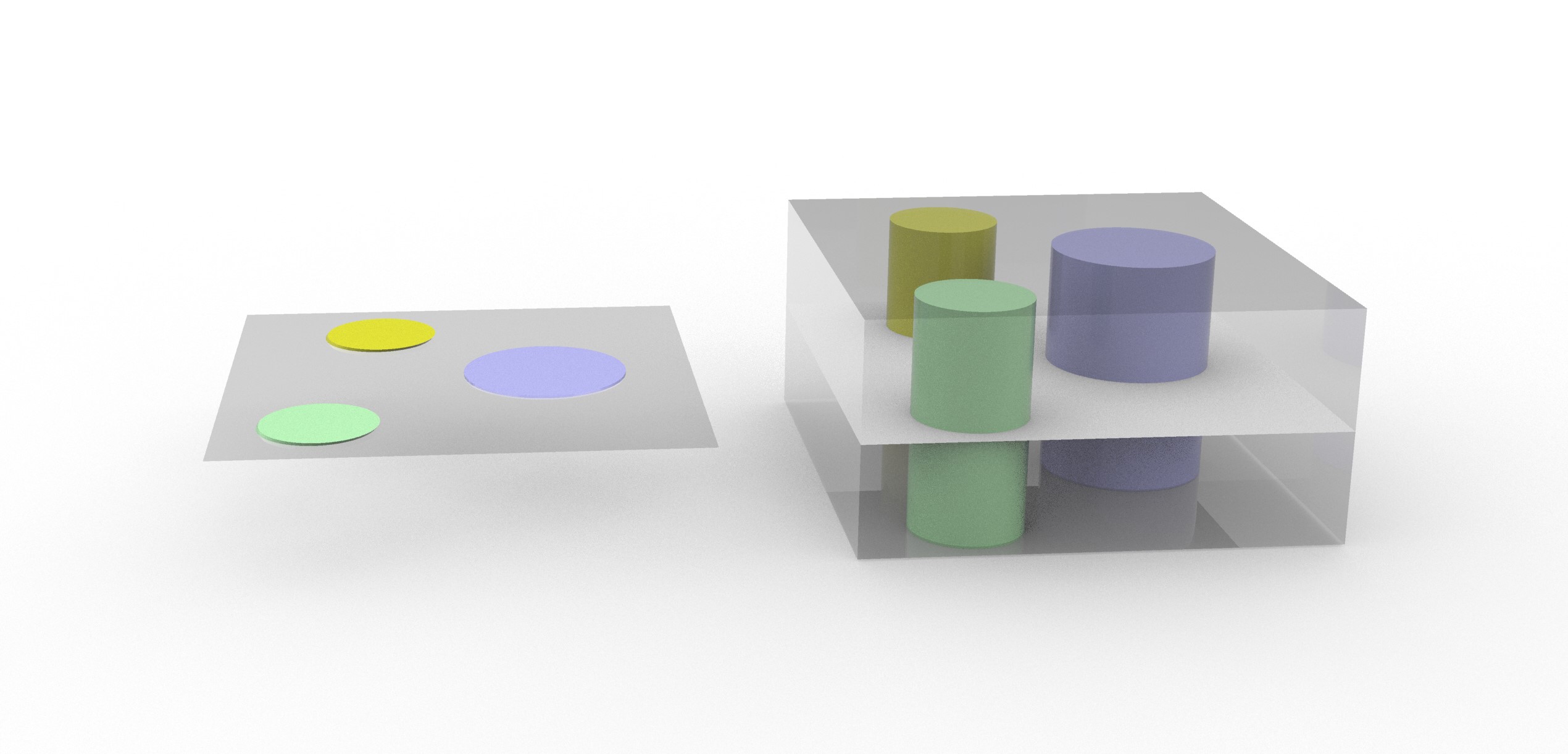}
    \caption{Left hand side is one of the layers of vector GMM. Right hand side is the chart centered at that layer. Gaussians in the original layer become "Gaussian cylinders" on the manifold.}
    \label{fig:onelayerchart}
\end{figure}

If a layer has more than one edge connected to it, then the original distribution may be lifted to multiple ``Gaussian cylinders'' (located at all the possible orbits that are centered at the given layer) with combined weights. Notice that even though the ``Gaussian cylinders'' project to be the same vector-valued distribution within the given layer, they may have different potentials to transport to different directions on the graph.


Let us briefly summarize the optimal transport problem we are going to solve on the manifold $\mathcal{M}$ with the approach we just introduced. Given the projection of each layer's range for the source (starting) and target (terminal) distributions, we want to find corresponding source and target distributions on $\mathcal{M}$ such that the transport cost is optimally low.
As before, we first consider the sub-problem where the starting and terminal vector-valued distributions are two Gaussian distributions, which may be located on different layers.

\begin{figure}[H]
    \centering
    \includegraphics[width=0.8\linewidth]{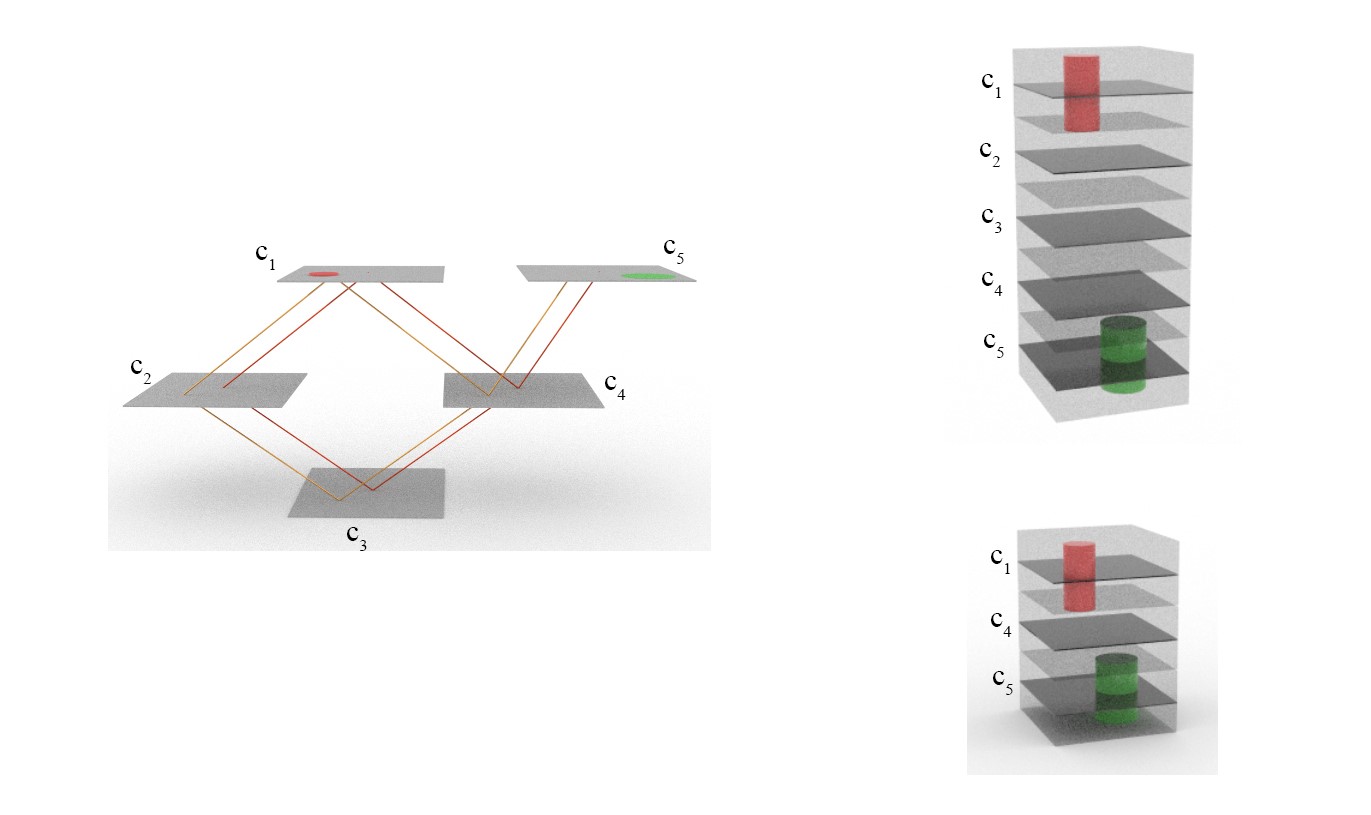}
    \caption{When we consider the transport map from the red Gaussian distribution to the green Gaussian, we consider the transport problem on all the charts that cover both Gaussian distributions. The above figure gives two of the charts.}
    \label{fig:twocharts}
\end{figure}

\begin{theorem}
For any two Gaussian cylinder-shaped distributions whose projections on each layer's range are simple Gaussian distributions denoted by $\nu_0$ and $\nu_1$ and located on layers $u$ and $w$, respectively, the optimal transport $\mathcal{W}_2$ distance between them on $\mathcal{M}$ is given by $d_{\mathcal{M}}=\mathcal{W}_2(\nu_0,\nu_1)^2+\gamma\tilde{d}_G(u,w)^2$
\end{theorem}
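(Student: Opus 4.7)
My plan is to reduce the optimal transport problem on the manifold $\mathcal{M}$ to a Wasserstein problem on a single well-chosen chart and then exploit the product structure of the Gaussian cylinders to decouple the cost into a spatial (Gaussian) part and a one-dimensional (graph) part.

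First, I would choose the chart $\mathbb{R}^N\times p$, where $p\in [G^c]_0$ is the cycle-free path realizing the shortest graph-distance from $u$ to $w$. Any admissible coupling of the two lifted distributions on $\mathcal{M}$ must move mass from layer $u$ to layer $w$, and therefore must be supported on some chart whose underlying path contains both vertices. Among such charts, the one whose path is a shortest route through $G$ minimizes the separation in the graph direction, so this is the natural candidate for the optimal chart.

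Second, on the chosen chart, which is isometric to $\mathbb{R}^{N+1}$ with the quadratic cost
\[
\|(x,z)-(x',z')\|^2=\|x-x'\|^2+(z-z')^2,
\]
I would write the lifted distributions as the product measures $\nu_0\otimes U_u$ and $\nu_1\otimes U_w$, where $U_u$ and $U_w$ are uniform distributions on the edge segments centered at $u$ and $w$ (the Gaussian-cylinder definition). Since the cost is additively separable across the two factors, the standard product-measure Wasserstein identity yields
\[
\mathcal{W}_2^2\bigl(\nu_0\otimes U_u,\;\nu_1\otimes U_w\bigr)=\mathcal{W}_2^2(\nu_0,\nu_1)+\mathcal{W}_2^2(U_u,U_w),
\]
where the first term is the closed-form Gaussian Wasserstein expression in (\ref{W2gaussian}), and the second reduces to the squared distance between the centers of two uniforms of equal length, giving the $\gamma\tilde{d}_G(u,w)^2$ contribution in the convention of the theorem.

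Finally, I would establish minimality over all admissible liftings and over all charts containing $u$ and $w$. Any other lifting of $\nu_0$ aligns the cylinder along a different edge incident to $u$, which forces the graph-direction component of any coupling to traverse an extra half-edge before joining the shortest route to $w$; by the quadratic, convex dependence of the transport cost on path length, this is strictly worse. The same holds for $\nu_1$, and any longer chart-path adds strictly more to the one-dimensional component as well. The main obstacle will be formalizing this last step cleanly, because $\mathcal{M}$ has overlapping charts and a potentially complicated global topology, as illustrated by Figures~\ref{fig.rgb} and \ref{fig.Klein}; one needs to argue that every feasible coupling on $\mathcal{M}$ admits a representative supported in the shortest-path chart without increasing its cost, so that the single-chart product computation above is in fact the global infimum.
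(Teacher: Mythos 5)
Your proposal is correct and follows essentially the same route as the paper's proof: restrict attention to charts $\mR^N\times p$ with $p$ a cycle-free path containing both $u$ and $w$, use the product (Gaussian-cylinder) structure and the separability of the quadratic cost to split the problem into the Gaussian $\mathcal{W}_2$ part and a 1D transport between uniform distributions, and then minimize the latter over paths to obtain $\gamma\tilde{d}_G(u,w)^2$. The only difference is cosmetic: you fix the shortest-path chart first and then argue its optimality, whereas the paper infimizes over all admissible paths simultaneously; the global "every coupling lives on some covering chart" step that you flag as the main obstacle is treated with the same level of informality in the paper itself.
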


\begin{proof}
We consider all couplings on the manifold $\mathcal{M}$ denoted by $\Pi(\mathcal{M})$. More precisely, we consider all the possible transports on the charts in $A$ which can cover the supports of both Gaussian cylinders lifted from the two original Gaussians. Namely, we consider all the charts in $\{\mR^N\times p|p\in [G^c]_0^{uw}\}$ where $[G^c]_0^{uw}$ denotes the subset of $[G^c]_0$ of those paths contain both layer $u$ and layer $w$. With this definition, we can explicitly formulate the optimization problem:
\begin{align*}
    d_{\mathcal{M}}=&\inf_{\pi\in\Pi(\mathcal{M})} \int_{\mathcal{M}\times \mathcal{M}}||\tilde{x}-\tilde{y}||^2\pi(d\tilde{x},d\tilde{y})\\
  =&\inf_{p\in [G^c]_0^{uw}}\inf_{\pi\in\Pi^p(\mR^{N+1})} \int_{\mR\times \mR}\int_{\mR^{N}\times \mR^{N}}||x-y||^2+|z_1-z_2|^2\pi(dxdz_1,dydz_2)
\end{align*}
Here, $\Pi^p(\mR^{N+1})$ denotes the couplings in $\mR^{N+1}$ (which is homeomorphic to $\mR^N\times p$) whose marginals are the source and target Gaussian cylinders, respectively.

Further, because of the special structure of ``Gaussian cylinders,'' the first $N$ indices and the last index may be treated separately. If we denote by $\Pi_1(\mR^{N})$ the set of couplings in $\mR^{N}$ for which the two marginals are the original source and target Gaussians (which does not depend on the path $p$), and denote by $\Pi^p_2(\mR)$  the set of couplings whose two marginals are two uniform distributions located in their corresponding layers, the distance expression may be divided into two parts:
\begin{align*}
    &\inf_{p\in [G^c]_0^{uw}}\inf_{\pi_1\in\Pi_1(\mR^{N})} \int_{\mR^{N}\times \mR^{N}}||x-y||^2\pi_1(dx,dy)+\inf_{p\in [G^c]_0^{uw}}\inf_{\pi_2\in\Pi^p_2(\mR)} \int_{\mR \times \mR}|z_1-z_2|^2\pi_2(dz_1,dz_2)
\end{align*}

The second term is a simple 1D optimal transport problem between two uniform distributions which are centered at $u$ and $w$, respectively, with the same radius of thickness of each layer. The optimal transport distance between them is simply the distance between their respective centers, which is easy to calculate since both centers are located on the path $p$. To be specific, the distance is the length of the path connecting $u$ and $w$ times the thickness of each layer. Hence,
\begin{align*}
    d_{\mathcal{M}}=&\inf_{\pi_1\in\Pi_1(\mR^{N})} \int_{\mR^{N}\times \mR^{N}}||x-y||^2\pi_1(dx,dy)+\inf_{p\in [G^c]_0^{uw}}\Delta_p z^2\\
  =&\quad\ \mathcal{W}_2(\nu_0,\nu_1)^2+\gamma\tilde{d}_G(u,w)^2.
\end{align*}
Here the relative distance $\Delta_p z$ is determined by the path $p$. Moreover, $\gamma$ is introduced as a parameter for the thickness of each layer's range. We assume that the thickness of each layer is $\sqrt{\gamma}$. The minimum among all the possible paths is just $\tilde{d}_G(u,w)$, the shortest distance on the graph $G$ between vertices $u$ and $w$.
\end{proof}

Using the latter theorem, we can compute the minimum $\mathcal{W}_2$ cost of moving a source Gaussian distribution to a Gaussian target distribution. Indeed, for the $i^{\rm th}$ and $j^{\rm th}$ Gaussian cylinders on $\mathcal{M}$, we set
\begin{equation}\label{c2}
    c_2(i,j)=\mathcal{W}_2(\nu_0^i,\nu_1^j)^2+\gamma\tilde{d}_G(q_0^i,q_1^j)^2 .
\end{equation}
If we take $c_2(\cdot,\cdot)$ in (\ref{c2}) as the cost matrix and compute the Kantorovich formulation of OMT, we can derive a distance:
\begin{equation}\label{vector gmm kantorovich2}
  d_{V_2}(\rho_0,\rho_1)^2=\min_{\pi\in\Pi(p_0,p_1)}\sum_{i,j}c_2(i,j)\pi(i,j).
\end{equation}
This distance is derived from the continuous manifold, but it may be shown to be a metric for our original vector-valued distributions. See the proof in Appendix~\ref{metric_proof}.

In addition to the latter distance, we can derive the optimal transport plan $\tilde{\pi}_2^*(\cdot,\cdot)$ from the optimal solution of (\ref{vector gmm kantorovich2}). The transport plan gives the combination of weights for how the ``Gaussian cylinders'' are arranged at different orbits within each layer's range.

Based on the optimal transport plan, a geodesic (proof in Appendix~\ref{geodesic2}) on the manifold $\mathcal{M}$ may be expressed in the following manner:
\begin{equation}\label{true geodesic}
  \rho_t=\sum_{i,j}\tilde{\pi}_2^*(i,j)\nu_t^{ij}U_z(path_G(q_0^i,q_1^j,t)),
\end{equation}
where $U_z(z_0)$ is the 1D uniform distribution density function centered at $z_0$ on a path of the graph $G$. The distribution $\nu_t^{ij}U_z(path_G(q_0^i,q_1^j,t))$ at time $t$ is supported on the chart defined by the shortest path that connects $q_0^i$ to $q_1^j$ on the graph, expressed in its own local continuous coordinates for the $z$ index along that path.

For each pair of Gaussians, source and target, a deformation Gaussian cylinder moves across layers following the shortest path on the graph. When it moves across a layer boundary, the Gaussian cylinder is cut into two parts belonging to the respective ranges of two adjacent layers. Each part remains a Gaussian cylinder. Hence after projection of each layer's range, the projected distribution is still a vector GMM distribution. Now if we project (\ref{true geodesic}) onto the range of each layer, we get the following displacement interpolation in the original space:
\begin{equation}\label{vector project geodesic}
  \rho_t=\sum_{i,j}\tilde{\pi}_2^*(i,j)\nu_t^{ij}\vec{\delta}_{path_G(q_0^i,q_1^j,t)}.
\end{equation}

Note this form of displacement interpolation is very similar to (\ref{vector geodesic}) just with slightly different weights.

\textbf{Remark 1:} We should note that the way in which we define vector-valued GMM, already makes it a manifold (each layer is its own chart). However, it is impossible to define charts that contain all the possible paths, which is to say the atlas contains only the geometric information within each layer. Our manifold $\mathcal{M}$ on the other hand, has an atlas that contains all the possible paths which encode global geometric information for which we can solve the OMT problem.

\textbf{Remark 2:} This approach gives a geometric intuitive understanding of the optimal transport for vector-valued GMMs. Comparing to the cost matrix (\ref{c1}) in Approach 1, (\ref{c2}) just uses the sum of squares instead of the direct sum, in analogy to the difference between $\mathcal{W}_1$ and $\mathcal{W}_2$. It may seem complicated to consider all the possible paths on the graph in that approach. But for actual computations, we only need to consider the shortest path on the $G$ that connects two layers. Moreover, from the parameter $\gamma$ that also appears in Approach 1, we find a clear geometric meaning: it  represents the thickness of each layer.

\section{OMT for unbalanced distributions}
The unbalanced OMT problem seeks to remove the restriction that the source and target have the same total mass. The total density of the spaces may be different, which means that mass can be created or destroyed during the process of transport. Very commonly, we encounter unbalanced data related to imaging problems in which intensity is taken to be mass. In medicine, mass may be created with cell proliferation and destroyed with cell apoptosis. Normalizing both inputs to be probability distributions is not natural and may cause numerical issues and in some cases loss of information. Thus we want to find a way to directly employ unnormalized data for the OMT on Gaussian mixtures. In short, for GMMs, we want to use the general form (\ref{gmm}) but without the restriction (\ref{probability1}) i.e., source and target distributions can have different total densities.

\subsection{Unbalanced GMM optimal mass transport}
We propose here a natural method for solving the unbalanced OMT problem in the GMM framework. The idea comes from \cite{zhu2020vectorial}, where we found that by adding a source layer to the original scalar problem, and using a special set of weight parameters, we can reformulate the unbalanced OMT problem as a version of the vector-valued OMT problem. But that work was based on Benamou-Brenier's computational fluid dynamics (CFD) form of OMT \cite{BB} and a Fisher-Rao source term \cite{chizat:hal-01271981}. Here we will show that we can use the same source layer trick in the Kantorovich formulation of the OMT problem, so that it may be employed to solve the unbalanced GMM problem as well.

The idea is to add a source layer to both the initial (starting) and target distributions and put the mass difference in that layer, so that the problem becomes balanced but vector-valued. If the initial distribution has more total mass, then the mass difference is added to the source layer of the target distribution. On the other hand, if the target distribution has more total mass, then the mass difference is added to the source layer of the initial distribution. So instead of considering the $n_0$ Gaussians to $n_1$ Gaussians one-layer problem, now we consider $n_0$ Gaussians to $(n_1+1)$ Gaussians (or $(n_0+1)$ Gaussians to $n_1$ Gaussians), i.e., two-layered vector-valued problem.

We should note that we do not need to specify the distribution (a Gaussian mixture) in the source layer in the first place, but just to consider a single implicit Gaussian distribution in that layer. In the $c(\cdot,\cdot)$ matrix of size $n_0\times (n_1+1)$ or $(n_0+1)\times n_1$,  the sub-matrix $c(1:n_0,1:n_1)$ may be computed as before (\ref{gmm_c}). We put $\gamma$ in the last row (or column), which is to say the cost of using source for unit mass is $\gamma$. Note that $\gamma$ is a parameter to control the weight of the source in the total distance formula.

After we have solved this extended Kantorovich problem, we can compute a displacement interpolation, since we have the optimal solution of $\pi^*(\cdot,\cdot)$. Note that the meaning of the last row (or the last column) is the amount of source the $j^{\rm th}$ Gaussian uses in the target distribution (or $i^{\rm th}$ Gaussian uses in the starting distribution). So we can add exactly the necessary amount of mass as a weighted same-shaped Gaussian in the source layer for each corresponding Gaussian in the original space. Then for the two explicit 2-vector GMMs, we can compute the displacement interpolations between them. The interpolation of only the original channel is the displacement interpolation for the two unbalanced GMMs.

\textbf{Remark} Actually we can use this approach more generally than the GMM or source layer case. Indeed, for any two unbalanced discrete distributions, one can add an implicit ``source node'' to the source or target distribution in order to make the two extended distributions balanced. Then if we consider the extended Kantorovich problem, we can get an OMT related ``distance'' between these two unbalanced distributions.

\subsection{Vector-valued GMM OMT and unbalanced vector-valued GMM problem}
We can treat the unbalanced vector-valued GMM problem in this framework as well. The problem of computing the OMT distance and the associated interpolation for color image data falls into this category, which makes it of great practical usefulness.

As before, the idea is to add a new source layer for both the source and target vector-valued GMM structures. The source layer is made to connect with each of the other layers. If we again put an implicit Gaussian function in that layer, we can get a balanced but extended version of vector GMM OMT problem, which can be solved through the scheme from Section~\ref{another approach}. The interpolation of the original part of two extended structures (the part other than the source layers) is then the displacement interpolation for the two unbalanced vector-valued GMMs.

\section{Numerical results}
Here we describe some numerical results of our proposed method.

\subsection{1D examples: approach 1 vs approach 2}
We consider the simplest 1D 2-channel vector GMM problem. Figure~\ref{fig:compare2approach} shows the distributions of the initial and target vector distributions.

\begin{figure}[H]
  \centering
  \includegraphics[width=0.2\linewidth]{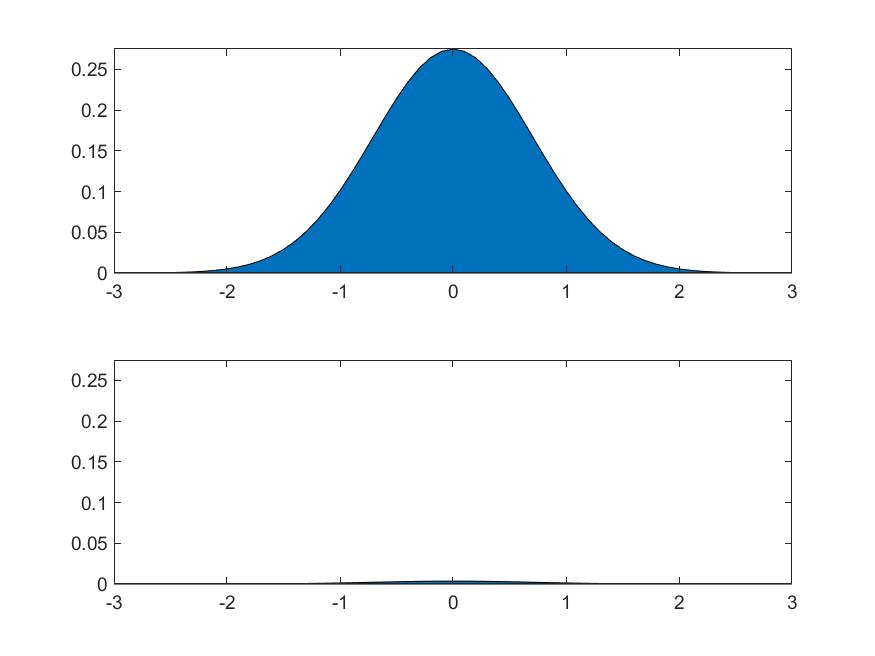}
  \includegraphics[width=0.2\linewidth]{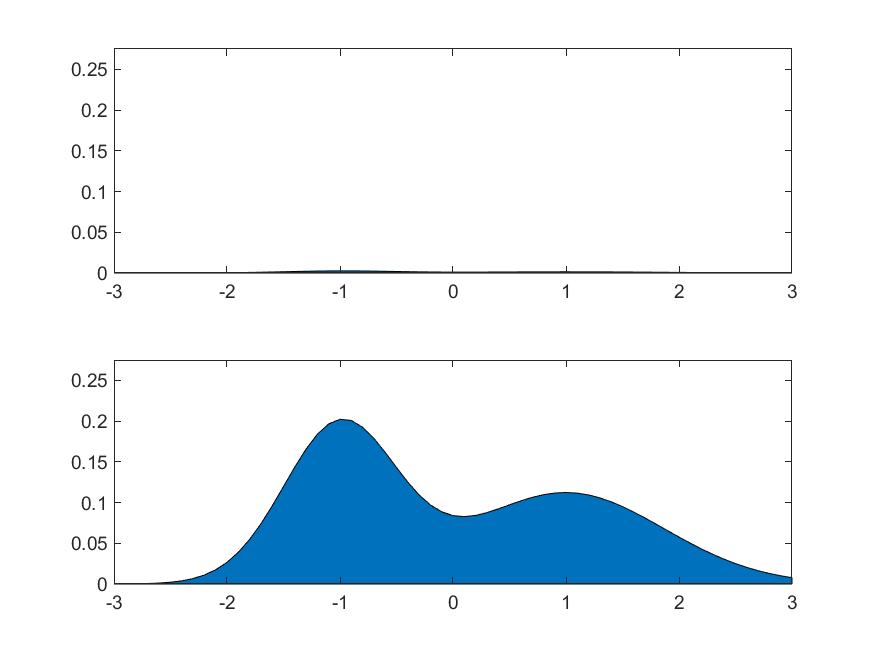}
  \caption{1D 2-channel example: starting and target vector-valued distributions}\label{fig:compare2approach}
\end{figure}

We tested the same example via both our Approach 1 (see Figure \ref{fig:density-1}) and Approach 2 (see Figure \ref{fig:density0}).
\begin{figure}[H]
  \centering
  \foreach \t in {1,3,5,7,9}{
  \includegraphics[width=0.18\linewidth]{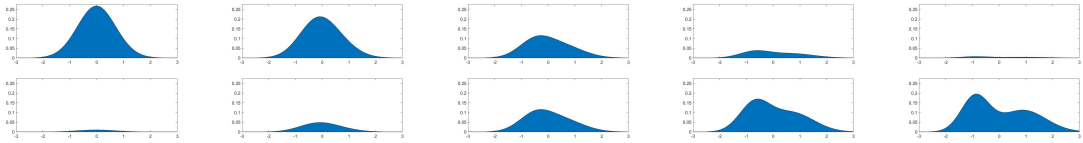}
  }
  \caption{1D 2-channel Example 1 - using approach 1: vector-valued distributions over time}\label{fig:density-1}
\end{figure}

\begin{figure}[H]
  \centering
  \foreach \t in {1,3,5,7,9}{
  \includegraphics[width=0.18\linewidth]{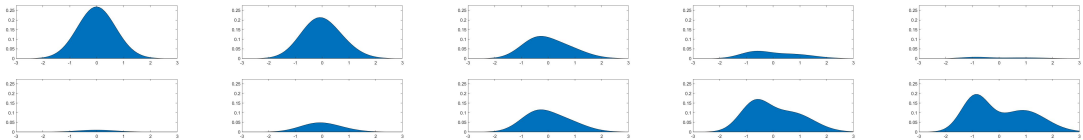}
  }
  \caption{1D 2-channel example 2 - using approach 2: vector-valued distributions over time}\label{fig:density0}
\end{figure}

The two sets of results look very similar. For most of the examples that we have tested, we cannot find discernible differences between these two approaches. This is due to the similar and correlated formulations of the two given cost matrices. We will thus use the formulation of Approach 2 in all of the following examples.

\subsection{1D examples: graph structure}
We consider the simplest 1D 3-channel problem, which is the simplest example with a nontrivial graph structure. Figure~\ref{fig:st1} shows the distributions of the initial (starting) and target vector distributions. The distributions in their 3 channels are explicitly and separately plotted. All the mass of the initial vector distribution is located in the first channel (one Gaussian), and all the mass of the target vector distribution (two Gaussians) is located in the third channel.

We tested two different graph structures for the vector-valued distribution problem. In Figure~\ref{fig:density1}, we show the interpolation results whose second channel is connected to the first and third ones, but whose first and third channels are not directly connected. In Figure~\ref{fig:density2}, we show the interpolation results in which three layers are fully connected.

We can see from these results that the mass transfer in \ref{fig:density1} goes through the second layer, while in \ref{fig:density2} mass in the first layer can directly transfer to the third layer.

\begin{figure}[H]
  \centering
  \includegraphics[width=0.2\linewidth]{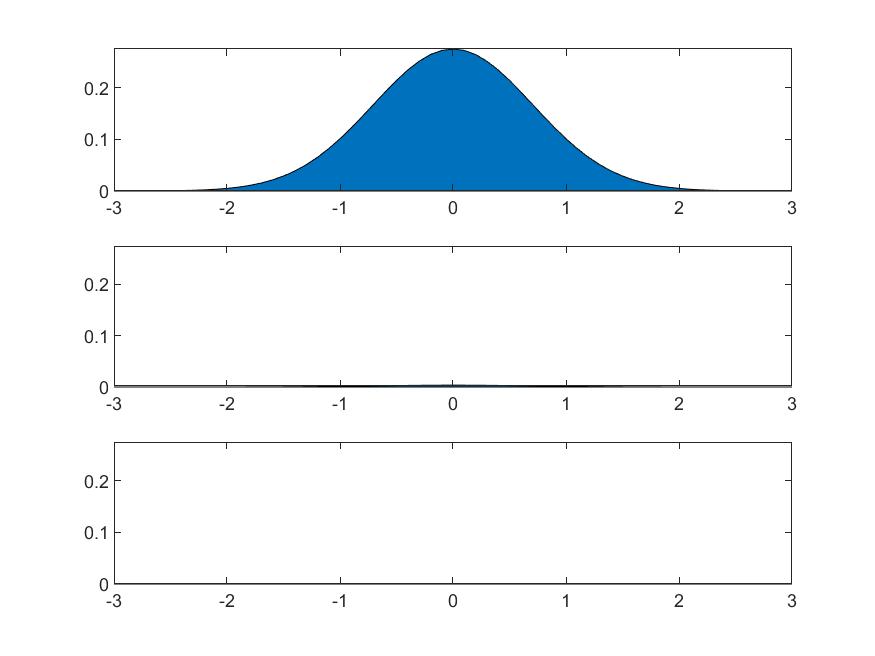}
  \includegraphics[width=0.2\linewidth]{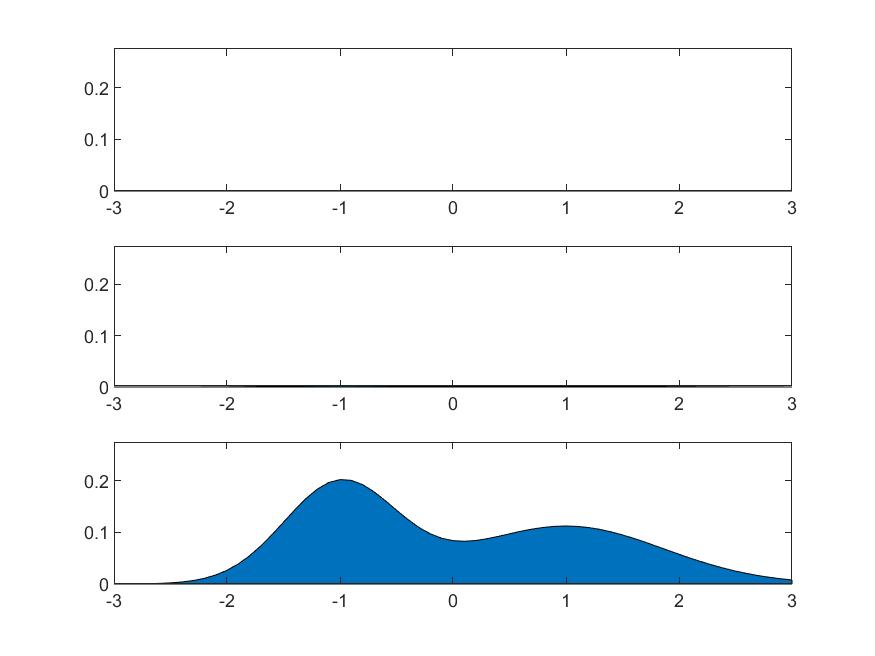}
  \caption{1D 3 channels example: starting and target vector-valued distributions}\label{fig:st1}
\end{figure}
\begin{figure}[H]
  \centering
  \foreach \t in {1,3,5,7,9}{
  \includegraphics[width=0.18\linewidth]{test1_\t.jpg}
  }
  \caption{1D 3-channel example 1 - not fully connected graph: vector-valued distributions over time}\label{fig:density1}
\end{figure}

\begin{figure}[H]
  \centering
  \foreach \t in {1,3,5,7,9}{
  \includegraphics[width=0.18\linewidth]{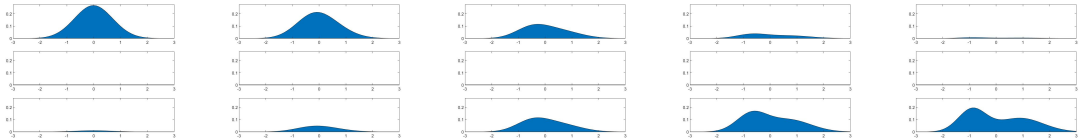}
  }
  \caption{1D 3-channel example 2 - fully connected graph: vector-valued distributions over time}\label{fig:density2}
\end{figure}

\subsection{2D examples: parameter $\gamma$}\label{2deg}

Now we consider 2D 3-vector distributions, and we take a graph which is not completely connected: the green channel is connected to the red channel and to the blue channel, but the red and blue channels are not directly connected. This is exactly the model for RGB 3-channel images. So the following vector-valued distributions are plotted as color images. We still use approach 2 method for this numerical test.

There are two different kinds of mass transfer: within layers and between layers. The parameter $\gamma$ controls the amount of mass that moves across layers. Indeed, with $\gamma$ large, the cost of moving mass between layers is large. By using different values of $\gamma$, we may get very different results.

We use a 2-Gaussian to 2-Gaussian example to show the effect of different values of $\gamma$. Both the initial and target vector-valued distributions have one Gaussian in the red layer and one in the blue layer. But the location of the balls are interchanged (see Figure~\ref{fig:st2}). With $\gamma$ large, the two Gaussians just move within their own respective layers (see Figure~\ref{fig:density3}). With $\gamma$ small, the red ball goes through green layer to blue layer while the blue ball takes the opposite path (see Figure~\ref{fig:density4}).

\begin{figure}[H]
  \centering
  \includegraphics[width=0.2\linewidth]{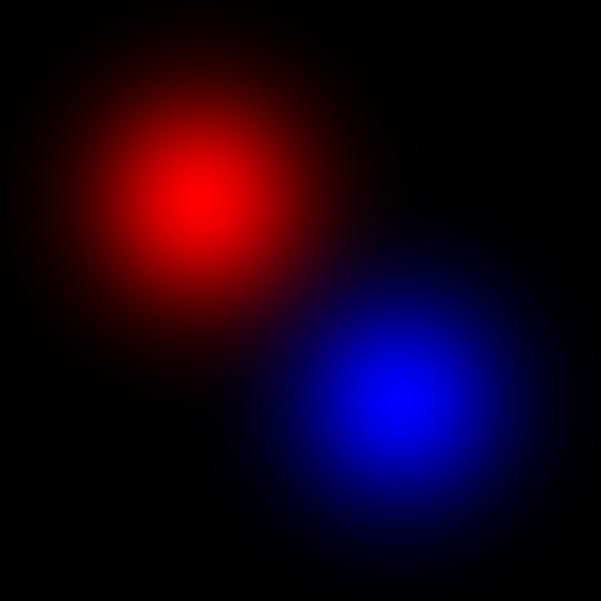}
  \includegraphics[width=0.2\linewidth]{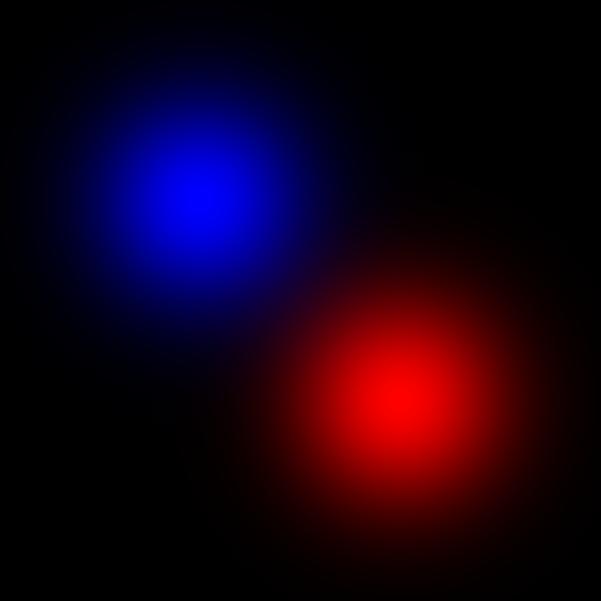}
  \caption{2D 3 channels example: starting and target vector distributions}\label{fig:st2}
\end{figure}
\begin{figure}[H]
  \centering
  \foreach \t in {1,3,5,7,9}{
  \includegraphics[width=0.18\linewidth]{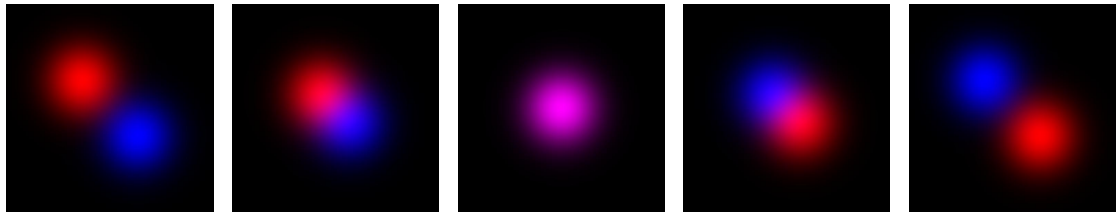}
  }
  \caption{2D example 1 - $\gamma$ large: vector-valued distributions over time}\label{fig:density3}
\end{figure}

\begin{figure}[H]
  \centering
  \foreach \t in {1,3,5,7,9}{
  \includegraphics[width=0.18\linewidth]{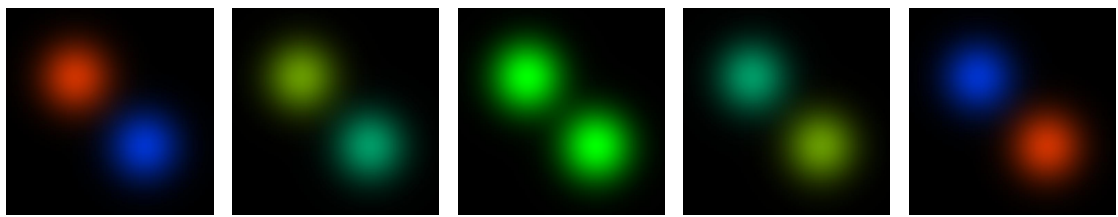}
  }
  \caption{2D example 2 - $\gamma$ small: vector-valued distributions over time}\label{fig:density4}
\end{figure}



\subsection{Realistic image data}
\subsubsection{Moon}

We tested our method on real-world data, namely, moon images. We first fitted GMMs to the images, and then applied our method.

Figure~\ref{fig:moon} shows the moon data on which we ran our algorithm. We fitted each layer with 400 Gaussians. The results are shown in Figure~\ref{fig:fittedmoon}. Figure~\ref{fig:intermoon} gives the interpolation between the two fitted GMMs.

\begin{figure}[H]
  \centering
  \includegraphics[width=0.2\linewidth]{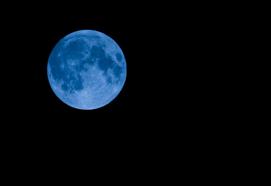}
  \includegraphics[width=0.2\linewidth]{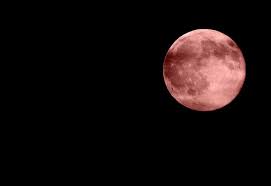}
  \caption{Original moon image: the left one is a blue moon and the right one is a pink moon}\label{fig:moon}
\end{figure}
\begin{figure}[H]
  \centering
  \includegraphics[width=0.2\linewidth]{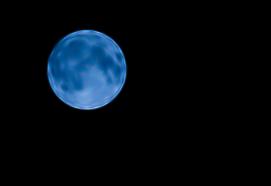}
  \includegraphics[width=0.2\linewidth]{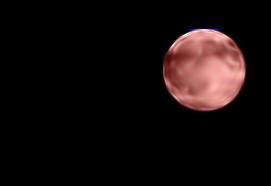}
  \caption{Fitted GMMs as source and target}\label{fig:fittedmoon}
\end{figure}
\begin{figure}[H]
  \centering
  \foreach \t in {10,8,6,4,2,0}{
  \includegraphics[width=0.15\linewidth]{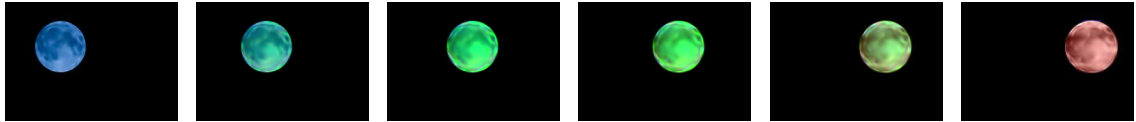}
  }
  \caption{Image example: vector-valued GMM geodesic path}\label{fig:intermoon}
\end{figure}

\subsubsection{Nebula}
In addition to the moon imagery, we tested our methodology on more complicated data, namely, nebulae. We obtained Figure~\ref{fig:nebula} from the NASA website \cite{jenner_2020}. This image compares two drastically different portraits of the Stingray nebula captured by NASA's Hubble Space Telescope 20 years apart. We fitted each layer by 80 Gaussians. The fitting results are shown in Figure~\ref{fig:fitnebula}. Figure~\ref{fig:fitnebula} shows the interpolation.

\begin{figure}[H]
    \centering
    \includegraphics[width=0.5\linewidth]{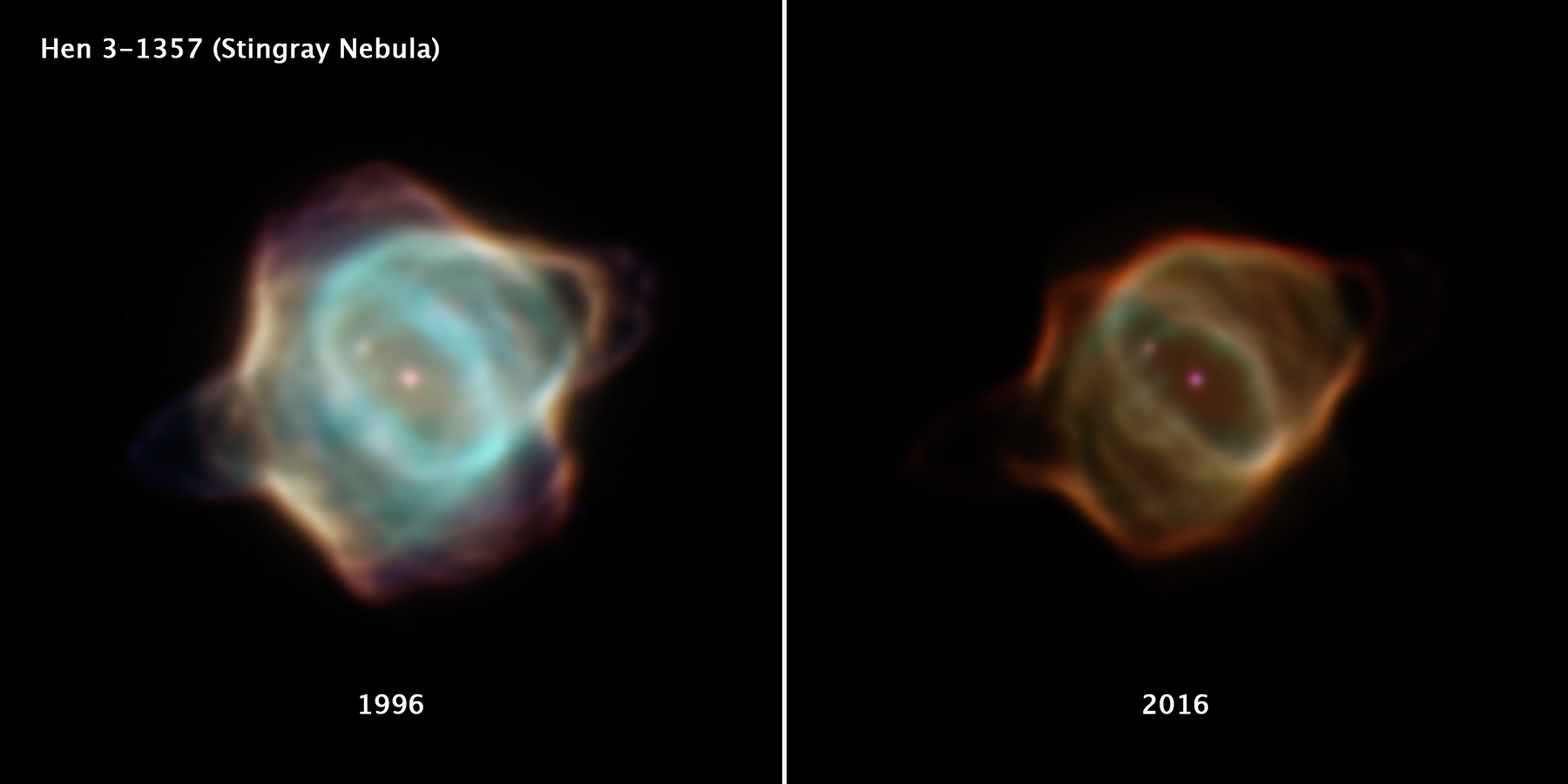}
    \caption{Stingray Nebula in 1996 and 2016}
    \label{fig:nebula}
\end{figure}

\begin{figure}[H]
    \centering
    \includegraphics[width=0.25\linewidth]{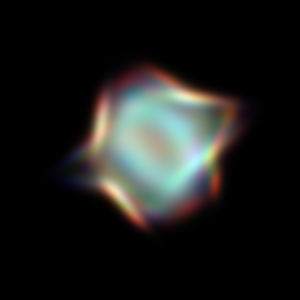}
    \includegraphics[width=0.25\linewidth]{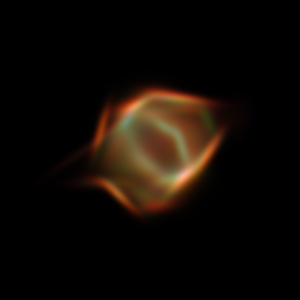}
    \caption{Fitted GMMs of the nebula image data}
    \label{fig:fitnebula}
\end{figure}

\begin{figure}[H]
    \centering
    \foreach \t in {0,2,4,6,8,10}{
  \includegraphics[width=0.15\linewidth]{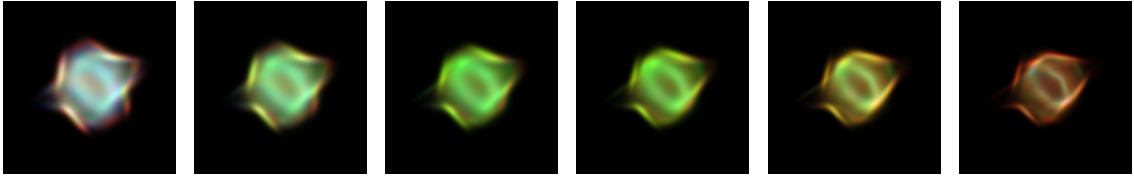}
  }
    \caption{Image example: vector-valued GMM geodesic path}
    \label{fig:internebula}
\end{figure}


\subsection{Fonts}
As an additional interesting application, we considered the transport between two different fonts, which may, for example, be used as an animation tool in PowerPoint. 

In Figure~\ref{fig::pix_croiss}, we show two different fonts of letters "MATH". These two fonts use 80 Gaussians and 19 Gaussians respectively. Figure~\ref{fig:interfont} shows the transformation. 

\begin{figure}[H]
    \centering
    \includegraphics[width=0.2\linewidth]{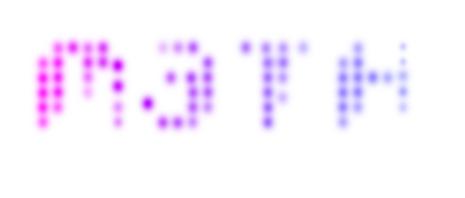}
    \hspace{3cm}
    \includegraphics[width=0.2\linewidth]{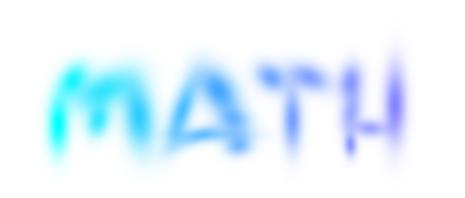}
    \caption{Two different fonts. Left: 80 Gaussians; Right: 19 Gaussians}
    \label{fig::pix_croiss}
\end{figure}

\begin{figure}[H]
  \centering
  \foreach \t in {0,2,4,6,8,10}{
  \includegraphics[width=0.15\linewidth]{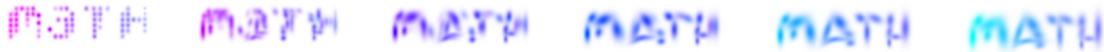}
  }
  \caption{Font transformation}
  \label{fig:interfont}
\end{figure}

We also tested the transformation between two different words. In Figure~\ref{fig:ejam2math}, we show how ``math'' be transformed in to ``game.'' Both of them contain 80 Gaussians. 

\begin{figure}[H]
    \centering
    \foreach \t in {0,2,4,6,8,10}{
  \includegraphics[width=0.15\linewidth]{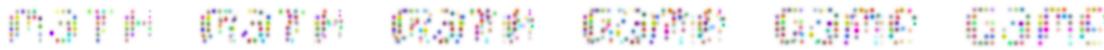}
  }
    \caption{Transformation between words ``math'' and ``game''}
    \label{fig:ejam2math}
\end{figure}

It is clear from the latter example that the proposed transport map preserves the Gaussian mixture structure. The transformation moves and rearranges the colored balls.

\subsection{Unbalanced GMM OMT}
We also tested our proposed methodology on unbalanced GMM data. The total mass of two Gaussians on the right in the initial (source) distribution is equal to the mass of the large Gaussian at the center of the target distribution. The mass of the two small Gaussians on the left of the target GMM equals the mass difference (See Figure~\ref{fig:st5}).

As expected, the two Gaussians in the initial distribution move together and combine as the larger Gaussian at the center of target distribution, while the presence of two small Gaussians on the left is due to the injection of source (see Figure~\ref{fig:density6}). It is easy to see from the surface plots of both the original and source layers as surfaces in 3D, that there are two Gaussian distributions in the source layer as the source of the mass difference in the original layer (see Figure~\ref{fig:density7}).

\begin{figure}[H]
  \centering
  \includegraphics[width=0.2\linewidth]{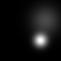}
  \includegraphics[width=0.2\linewidth]{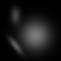}
  \caption{Fitted GMMs as starting and target}\label{fig:st5}
\end{figure}
\begin{figure}[H]
  \centering
  \foreach \t in {1,3,5,7,9}{
  \includegraphics[width=0.18\linewidth]{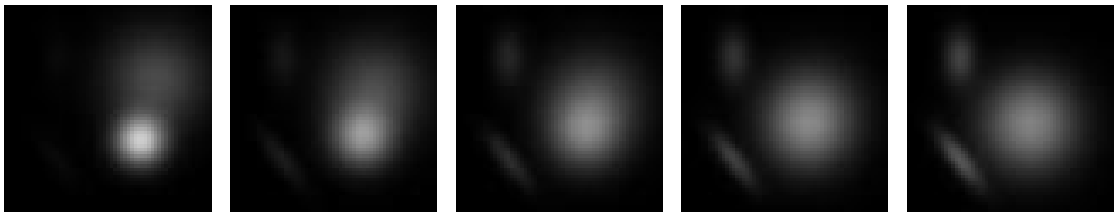}
  }
  \caption{Unbalanced example: vector-valued distributions over time}\label{fig:density6}
\end{figure}
\begin{figure}[H]
  \centering
\foreach \t in {1,3,5,7,9}{
  \includegraphics[width=0.18\linewidth]{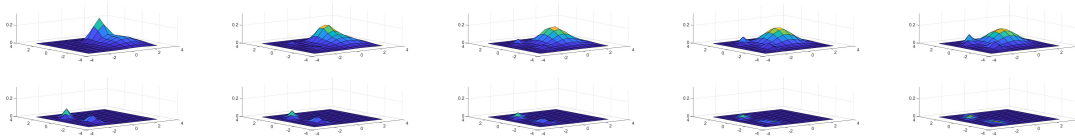}
  }
  \caption{Unbalanced example: vector distributions and source layer plotted as surfaces}\label{fig:density7}
\end{figure}

\section{Conclusion}
This work focuses on the optimal transport for vector-valued GMMs, which is a structured version of vector-valued OMT. As an extension of \cite{Chen2019}, we defined a distance and geodesic path in the vector-valued case. To the best of our knowledge, the present work is the \textbf{\emph{first}} to employ a manifold-based approach to the problem of GMM vector-valued data. Simply applying manifold-valued OMT to vector-valued distributions while preserving the vector GMM structure is not completely straightforward. In fact, just combining the vector-valued case in which the layers are connected by a general graph structure \cite{vectorvalued} and the GMM metric \cite{Chen2019,delon_gmm} via adding constraints to the appropriate set of joint probability distributions may not work. See Appendix A below for all of the details. Thus one needs a manifold-based approach in the present situation, which we have shown easily extends to the unbalanced case. In particular, we have extended the approach of transforming the unbalanced scalar OMT problem to the balanced vector-valued problem from CFD \cite{zhu2020vectorial} to a Kantorovich formulation in this work. Preserving the GMM structure along a transport path both in the balanced and unbalanced cases may have broad applications given the prevalence of such models in many areas of engineering, computer science, and machine learning \cite{McLachlan2000}.

 The proposed transport is useful, because of its speed advantage and unique ability to preserve structure. This paper just investigates Gaussian mixture case, but it is quite straightforward to apply our framework for other mixture models.  We are planning on applying our methodology to the analysis of medical imagery and other appropriate vector-valued distributions including multi-omic data.

\section{Acknowledgements}

This study was supported by AFOSR grants (FA9550-17-1-0435, FA9550-20-1-0029), NIH grant (R01-AG048769), and a grant from Breast Cancer Research Foundation (grant BCRF-17-193).

\begin{appendices}
\section{Approach 0: Direct vectorial generalization of GMM}\label{modify_PI}

Defining a ``correct'' vectorial notion of OMT in the GMM case is not completely straightforward. Here we show why the simplest scheme may fail.
Letting $\Pi(p_0,p_1)$ denote the set of joint probabilities with given marginals $p_0$ and $p_1$, the most straightforward idea is to add certain constraints on $\Pi$. Namely, we add the following constraints based on the graph structure:
\begin{equation}
  \Pi_G(p_0,p_1)= \left \{ \pi\in\mR_+^{n_0\times n_1}|\pi \vec{1}_{n_1}=p_0,\ \pi^T \vec{1}_{n_0}=p_1, \pi(i,j)=0\ \text{if}\ (q^i_0,q^j_1)\notin E(G)\cup\{(v,v)|v\in V(G)\} \right \}.
\end{equation}
Thus transport is allowed between two Gaussians only when they are in the same channel or they located in adjacent channels.

Accordingly, we can define a ``distance:''
\begin{equation}\label{straight vector gmm}
    d_{V_0}(\rho_0,\rho_1)^2=\inf_{\pi\in\Pi_G(p_0,p_1)}\sum_{i,j}\pi(i,j)\mathcal{W}_2(\nu^{i}_0,\nu^{j}_1)^2.
\end{equation}

This ``distance'' is a pseudo-metric, which is to say, $d_{V_0}(\cdot,\cdot)$ satisfies all the other metric conditions, but may be zero for two different distributions. Indeed, $d_{V_0}(\cdot,\cdot)$ is zero if one vector GMM is a permutation of channels of the other GMM. One may prove that
$d_{V_0}(\cdot,\cdot)\geq 0$ and satisfies triangle inequality. We do not include the proofs because of the limitations described below.

\subsection{Limitations of  approach 0}\label{limitations}

There are several problems with this approach, which led us to propose the other approaches in the main text. Here are some of the problems:
\begin{itemize}
    \item There may be no solution, since $\Pi_G(p_0,p_1)$ may be empty.
    \item The transport can only go through two connected channels.
    \item The cost of mass transport between channels is not included.
    \item As discussed above, the proposed ''distance'' may be $0$ for two different marginals.
\end{itemize}
For these reasons, we did not pursue this straightforward direct approach.

\section{Proof of Theorem \ref{metric}}\label{new distance}
\begin{proof}
Clearly, $d_{V_1}(\rho_0,\rho_1)\ge0$, $\forall \rho_0,\rho_1\in\mathcal{G}(\mathbb{R}^N)$ and $d_{V_1}(\rho_0,\rho_1)=0$ if and only if $\rho_0=\rho_1$.

We now prove the triangle inequality $d_{V_1}(\rho_0,\rho_2)\le d_{V_1}(\rho_0,\rho_1)+d_{V_1}(\rho_1,\rho_2)$. First we denote $$\pi_{02}(i,k)=\sum_{j=1}^{n_1}\frac{\pi_{01}(i,j)\pi_{12}(j,k)}{p^{j}_1}.$$
 As $\pi_{02}$ is a joint distribution with marginals $\rho_0$ and $\rho_2$, we have
    \begin{align*}
        d_{V_1}(\rho_0,\rho_2)&\le\sum_{i,k}\pi_{02}(i,k)[\mathcal{W}_2(\nu_0^i,\nu_2^k)+\gamma \tilde{d}_G(q_0(i),q_2(k))]\\
        &\le \sum_{i,j,k}\frac{\pi_{01}(i,j)\pi_{12}(j,k)}{p^{j}_1}[\mathcal{W}_2(\nu_0^i,\nu_1^j)+\mathcal{W}_2(\nu_1^j,\nu_2^k)+\gamma \tilde{d}_G(q_0(i),q_1(j))+\gamma \tilde{d}_G(q_1(j),q_2(k))]\\
        &=\sum_{i,j}\frac{\pi_{01}(i,j)p^{j}_1}{p^{j}_1}[\mathcal{W}_2(\nu_0^i,\nu_1^j)+\gamma \tilde{d}_G(q_0(i),q_1(j))]\\
        &+\sum_{j,k}\frac{p^j_1\pi_{12}(j,k)}{p^{j}_1}[\mathcal{W}_2(\nu_1^j,\nu_2^k)+\gamma \tilde{d}_G(q_1(j),q_2(k))]\\
        &=d_{V_1}(\rho_0,\rho_1)+d_{V_1}(\rho_1,\rho_2)
    \end{align*}
\end{proof}

\section{Proof of Theorem \ref{geo}}\label{geodesic_proof}
    \begin{proof}
    $$\rho_t=\sum_{i,j}\tilde{\pi}_1^*(i,j)\nu_t^{ij}\vec{\delta}_{path_G(q_0^i,q_1^j,t)}$$

    Locally, let $\bar{d}=\tilde{d}_G(q_0(i),q_i(j))$. Rename the vertices on this shortest path in order $0-1-\cdots-\bar{d}$. Denote $a=t\bar{d}-\lfloor t\bar{d}\rfloor$, $b=s\bar{d}-\lfloor s\bar{d}\rfloor$, $r=\lfloor t\bar{d}\rfloor-\lfloor s\bar{d}\rfloor$. Using this notation, the displacement interpolation may be expressed as:

    $$\rho_t=\sum_{i,j}\tilde{\pi}_1^*(i,j)\nu_t^{ij}((1-a)\vec{\delta}_{\lfloor t\bar{d}\rfloor}+a\vec{\delta}_{\lfloor t\bar{d}\rfloor+1})$$

    $$\rho_s=\sum_{i,j}\tilde{\pi}_1^*(i,j)\nu_t^{ij}((1-b)\vec{\delta}_{\lfloor s\bar{d}\rfloor}+b\vec{\delta}_{\lfloor s\bar{d}\rfloor+1})$$

    Then we have
    \begin{align*}
        d_{V_1}(\rho_s,\rho_t)&\le\sum_{i,j}\tilde{\pi}_1^*(i,j)[W_2(\nu_s^{ij},\nu_t^{ij})+\gamma(r(1-a)(1-b)+rab+(r-1)(1-a)b+(r+1)(1-b)a)]\\
        &=\sum_{i,j}\tilde{\pi}_1^*(i,j)[W_2(\nu_s^{ij},\nu_t^{ij})+\gamma(r+a-b)]\\
        &=\sum_{i,j}\tilde{\pi}_1^*(i,j)[(t-s)W_2(\nu_0^{i},\nu_1^{j})+\gamma(t-s)\bar{d}\\
        &=(t-s)\sum_{i,j}\tilde{\pi}_1^*(i,j)[W_2(\nu_0^{i},\nu_1^{j})+\gamma\tilde{d}_G(q_0(i),q_i(j))]\\
        &=(t-s)d_{V_1}(\rho_0,\rho_1)
    \end{align*}
    By the triangle inequality,
    \begin{align*}
    d_{V_1}(\rho_0,\rho_1)&\le d_{V_1}(\rho_0,\rho_s)+d_{V_1}(\rho_s,\rho_t)+d_{V_1}(\rho_t,\rho_1)\\
    &\le sd_{V_1}(\rho_0,\rho_1)+(t-s)d_{V_1}(\rho_0,\rho_1)+(1-t)d_{V_1}(\rho_0,\rho_1)\\
    &=d_{V_1}(\rho_0,\rho_1).
\end{align*}

These two inequalities give the result $d_{V_1}(\mu_s,\mu_t)=(t-s)d_{V_1}(\mu_0,\mu_1)$.
\end{proof}

\section{Metric $d_{V_2}(\cdot,\cdot)$}\label{metric_proof}
\begin{theorem}\label{metric2}
$d_{V_2}(\cdot,\cdot)$ defines a metric on $\mathcal{G}(\mR^N\times G)$
\end{theorem}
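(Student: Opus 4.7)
The plan is to verify the four metric axioms, mirroring the proof of Theorem~\ref{metric} but adapted to the fact that $c_2$ is a sum of \emph{squared} distances rather than a sum of distances. Non-negativity is immediate from $c_2 \geq 0$, and symmetry follows because $\mathcal{W}_2$ and $\tilde{d}_G$ are symmetric and the map $\pi \mapsto \pi^T$ is a bijection between $\Pi(p_0,p_1)$ and $\Pi(p_1,p_0)$ that preserves the objective. For identity of indiscernibles, if $d_{V_2}(\rho_0,\rho_1)=0$, then an optimal $\pi^*$ satisfies $\pi^*(i,j)\,c_2(i,j)=0$ for all $(i,j)$; on the support of $\pi^*$ this forces both $\mathcal{W}_2(\nu_0^i,\nu_1^j)=0$ and $\tilde{d}_G(q_0^i,q_1^j)=0$, so $\nu_0^i=\nu_1^j$ and $q_0^i=q_1^j$ there. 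Expanding $\rho_0 = \sum_{i,j}\pi^*(i,j)\,\nu_0^i\,\vec{\delta}_{q_0^i}$ via the marginal identity $\sum_j \pi^*(i,j)=p_0^i$ and substituting $\nu_1^j,q_1^j$ on the support yields $\rho_0=\rho_1$; the reverse direction is immediate by taking the identity coupling on a shared representation.

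The bulk of the work is the triangle inequality $d_{V_2}(\rho_0,\rho_2) \leq d_{V_2}(\rho_0,\rho_1) + d_{V_2}(\rho_1,\rho_2)$. The plan is to use the standard Wasserstein-2 gluing construction: given optimizers $\pi_{01}$ and $\pi_{12}$, define
\[
\pi_{02}(i,k) \;=\; \sum_j \frac{\pi_{01}(i,j)\,\pi_{12}(j,k)}{p_1^j},
\]
which is a valid element of $\Pi(p_0,p_2)$ by the marginal check already performed in Theorem~\ref{pseudo metric}. The new wrinkle, compared to Theorem~\ref{metric}, is that the naive triangle inequality cannot be applied directly inside $c_2$. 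Instead, for any intermediate index $j$, I would combine the triangle inequalities for $\mathcal{W}_2$ and $\tilde{d}_G$ with Minkowski's inequality in $\mathbb{R}^2$ (applied to the vectors $(\mathcal{W}_2,\sqrt{\gamma}\,\tilde{d}_G)$) to obtain
\[
\sqrt{c_2(i,k)} \;\leq\; \sqrt{c_2^{01}(i,j)} + \sqrt{c_2^{12}(j,k)},
\]
where $c_2^{01},c_2^{12}$ denote the ground costs for the two individual transports.

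Substituting into $d_{V_2}(\rho_0,\rho_2)^2 \leq \sum_{i,j,k} \mu(i,j,k)\,c_2(i,k)$ with $\mu(i,j,k)=\pi_{01}(i,j)\pi_{12}(j,k)/p_1^j$, and then applying Minkowski's inequality a second time in $L^2(\mu)$, together with the marginal identities $\sum_k \mu(i,j,k)=\pi_{01}(i,j)$ and $\sum_i \mu(i,j,k)=\pi_{12}(j,k)$, collapses the right-hand side to $d_{V_2}(\rho_0,\rho_1)+d_{V_2}(\rho_1,\rho_2)$. The main obstacle is precisely this double invocation of Minkowski: first in $\mathbb{R}^2$, to move the two underlying triangle inequalities past the square root defining $\sqrt{c_2}$, and then in $L^2(\mu)$, to pass from pointwise bounds on $\sqrt{c_2(i,k)}$ to bounds on the coupling integrals. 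Either step is routine in isolation, but it is their combination, forced by the Pythagorean structure of $c_2$, that distinguishes this argument from the simpler $\ell^1$-style proof used for $d_{V_1}$ in Theorem~\ref{metric}.
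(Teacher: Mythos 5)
Your proposal is correct and follows essentially the same route as the paper: the gluing coupling $\pi_{02}(i,k)=\sum_j \pi_{01}(i,j)\pi_{12}(j,k)/p_1^j$, the component triangle inequalities for $\mathcal{W}_2$ and $\tilde{d}_G$, and a Minkowski-type inequality to split the square root. The only difference is presentational: you apply Minkowski twice (once in $\mathbb{R}^2$ pointwise, once in $L^2(\mu)$), whereas the paper packages both steps into a single lemma that is exactly Minkowski's inequality for the Euclidean norm on the combined index set.
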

Clearly, $d_{V_2}(\rho_0,\rho_1)\ge0$ for any $\rho_0,\rho_1\in\mathcal{G}(\mR^N\times G)$.

And clearly, $d_{V_2}(\rho_0,\rho_1)=0$ if $\rho_0=\rho_1$. If $\rho_0\neq\rho_1$ and $d_{V_2}(\rho_0,\rho_1)=0$, the solution of Kantorovich problem $\Pi(x,y)=0$, a.e, because $c(x,y)\geq 0\ a.e$, $c(x,y)=0$ if and only if $x=y$ which is a zero measure set. So
$d_{V_2}(\rho_0,\rho_1)=0$ if and only if $\rho_0=\rho_1$ a.e. We next prove the triangular inequality, namely, $$d_{V_2}(\rho_0,\rho_1)+d_{V_2}(\rho_1,\rho_2)\ge d_{V_2}(\rho_0,\rho_2).$$

Before proving the triangle inequality, we first need the following lemma:
\begin{lemma}
\begin{equation}
    \sqrt{\sum_{i=1}^n(a_i+b_i)^2+(c_i+d_i)^2}\le\sqrt{\sum_{i=1}^n a_i^2+c_i^2}+\sqrt{\sum_{i=1}^n b_i^2+d_i^2},
\end{equation}
$\forall\ a_i,b_i,c_i,d_i\in\mR,\ i=1,...,n$ and $n\geq 1$.
\end{lemma}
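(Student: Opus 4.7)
The plan is to recognize this inequality as Minkowski's inequality (the triangle inequality for the Euclidean norm) applied in $\mR^{2n}$. Concretely, I would introduce the vectors
\[
  u = (a_1,\ldots,a_n,c_1,\ldots,c_n) \in \mR^{2n}, \qquad v = (b_1,\ldots,b_n,d_1,\ldots,d_n) \in \mR^{2n}.
\]
Then $\|u\|_2 = \sqrt{\sum_i a_i^2 + c_i^2}$, $\|v\|_2 = \sqrt{\sum_i b_i^2 + d_i^2}$, and $\|u+v\|_2 = \sqrt{\sum_i (a_i+b_i)^2 + (c_i+d_i)^2}$. The claim is exactly $\|u+v\|_2 \le \|u\|_2 + \|v\|_2$, which is the triangle inequality for the standard Euclidean norm and may be cited directly.

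If a self-contained argument is preferred, I would derive it from Cauchy--Schwarz in $\mR^{2n}$ in the usual way: expand
\[
  \|u+v\|_2^2 = \|u\|_2^2 + 2\langle u,v\rangle + \|v\|_2^2,
\]
apply $\langle u,v\rangle \le \|u\|_2\|v\|_2$ (Cauchy--Schwarz), and conclude
\[
  \|u+v\|_2^2 \le (\|u\|_2 + \|v\|_2)^2,
\]
then take square roots (both sides are nonnegative).

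There is essentially no obstacle here; the only mild care needed is to index the $2n$ coordinates consistently so that the pairing $(a_i,c_i)\leftrightarrow (b_i,d_i)$ matches on both sides of the inequality. Once the embedding into $\mR^{2n}$ is written down, the inequality is immediate. This lemma will then be used in the subsequent proof of the triangle inequality for $d_{V_2}$ to combine the Wasserstein-cost contribution and the graph-distance contribution (which enter as two independent coordinates inside a square root) when gluing couplings $\pi_{01}$ and $\pi_{12}$ via the standard $\pi_{02}(i,k)=\sum_j \pi_{01}(i,j)\pi_{12}(j,k)/p_1^{j}$ construction used in Theorem~\ref{pseudo metric}.
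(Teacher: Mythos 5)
Your proof is correct, and it is a genuinely shorter route than the one in the paper. You observe that the statement is exactly the triangle inequality for the Euclidean norm in $\mR^{2n}$ applied to $u=(a_1,\ldots,a_n,c_1,\ldots,c_n)$ and $v=(b_1,\ldots,b_n,d_1,\ldots,d_n)$, so a single application of Cauchy--Schwarz in $\mR^{2n}$ (or a direct citation of Minkowski) finishes it. The paper instead reproves this from scratch in a blockwise fashion: it expands the left-hand side, applies Cauchy--Schwarz separately to $\sum a_ib_i$ and to $\sum c_id_i$, and then needs an additional step --- essentially a second, two-dimensional Cauchy--Schwarz argument --- to merge $\sqrt{\sum a_i^2\sum b_i^2}+\sqrt{\sum c_i^2\sum d_i^2}$ into $\sqrt{\left(\sum a_i^2+\sum c_i^2\right)\left(\sum b_i^2+\sum d_i^2\right)}$. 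The two arguments establish the same fact; yours buys brevity and makes the structural reason for the inequality transparent (the Wasserstein cost and the graph-distance cost are just two orthogonal blocks of one Euclidean vector), while the paper's is fully self-contained at the cost of an extra intermediate inequality. Your closing remark about how the lemma feeds into the triangle inequality for $d_{V_2}$ via the glued coupling $\pi_{02}$ also matches the paper's usage exactly.
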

\begin{proof}
    \begin{align*}
        \mbox{LHS}^2&=\sum(a_i^2+b_i^2+c_i^2+d_i^2+2a_i b_i+2c_i d_i)\\
        &\leq\sum(a_i^2+b_i^2+c_i^2+d_i^2)+2(\sqrt{\sum a_i^2\sum b_i^2}+\sqrt{\sum c_i^2\sum d_i^2}),
    \end{align*}
    \begin{align*}
        (\sqrt{\sum a_i^2\sum b_i^2}+\sqrt{\sum c_i^2\sum d_i^2})^2&=\sum a_i^2\sum b_i^2+\sum c_i^2\sum d_i^2+2\sqrt{\sum a_i^2\sum b_i^2\sum c_i^2\sum d_i^2}\\
        &\le\sum a_i^2\sum b_i^2+\sum c_i^2\sum d_i^2+\sum a_i^2\sum d_i^2+\sum b_i^2\sum c_i^2\\
        &=(\sum a_i^2+\sum c_i^2)(\sum b_i^2+\sum d_i^2).
    \end{align*}

By combining the above two inequalities, we have
\begin{align*}
        \mbox{LHS}^2&\le \sum (a_i^2+b_i^2+c_i^2+d_i^2)+2\sqrt{(\sum a_i^2+\sum c_i^2)(\sum b_i^2+\sum d_i^2)}\\
        &=(\sqrt{\sum a_i^2+c_i^2}+\sqrt{\sum b_i^2+d_i^2})^2\\
        &=\mbox{RHS}^2.
    \end{align*}
\end{proof}

Next we prove the triangle inequality.
Similar to the proof in Appendix~\ref{new distance}, we define $$\pi_{02}=\sum_{j}\frac{\pi_{01}(i,j)\pi_{12}(j,k)}{p_1^j}.$$
Since $\pi_{02}\in\Pi(p_0,p_2)$, we have
\begin{align*}
    &d_{V_2}(\rho_0,\rho_2)\le\sqrt{\sum_{i,k}\pi_{02}(i,k)[W_2(\nu_0^i,\nu_2^k)^2+\gamma\tilde{d}_G(q_0^i,q_2^k)^2}]\\
    &\le \sqrt{\sum_{i,j,k}\frac{\pi_{01}(i,j)\pi_{12}(j,k)}{p^{j}_1}[(\mathcal{W}_2(\nu_0^{i},\nu_1^{j})+\mathcal{W}_2(\nu_1^{j},\nu_2^{k}))^2+\gamma(\tilde{d}_G(q_0^i,q_1^j)+\tilde{d}_G(q_1^j,q_2^k))^2]}\\
    &\le \sqrt{\sum_{i,j,k}\frac{\pi_{01}(i,j)\pi_{12}(j,k)}{p^{j}_1}[\mathcal{W}_2(\nu_0^{i},\nu_1^{j})^2+\gamma\tilde{d}_G(q_0^i,q_1^j)^2]}+\sqrt{\sum_{i,j,k}\frac{\pi_{01}(i,j)\pi_{12}(j,k)}{p^{j}_1}[\mathcal{W}_2(\nu_1^{j},\nu_2^{k})^2+\gamma\tilde{d}_G(q_1^j,q_2^k)^2]}\\
    &=\sqrt{\sum_{i,j}\pi_{01}(i,j)[\mathcal{W}_2(\nu_0^{i},\nu_1^{j})^2+\gamma\tilde{d}_G(q_0^i,q_1^j)^2]}+\sqrt{\sum_{j,k}\pi_{12}(j,k)[\mathcal{W}_2(\nu_1^{j},\nu_2^{k})^2+\gamma\tilde{d}_G(q_1^j,q_2^k)^2]}\\
    &=d_{V_2}(\rho_0,\rho_1)+d_{V_2}(\rho_1,\rho_2).
\end{align*}
The last inequality used the lemma above.
$\hfill\square$

\section{Geodesics of $d_{V_2}(\cdot,\cdot)$}\label{geodesic2}
\begin{theorem}
$$d_{V_2}(\rho_s,\rho_t)=(t-s)d_{V_2}(\rho_0,\rho_1), \qquad 0\le s<t\le1$$
\end{theorem}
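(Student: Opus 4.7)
The plan mirrors the proof of Theorem~\ref{geo} for $d_{V_1}$: produce an explicit coupling between $\rho_s$ and $\rho_t$ to get the upper bound $d_{V_2}(\rho_s,\rho_t)\le(t-s) d_{V_2}(\rho_0,\rho_1)$, and then apply the triangle inequality from Appendix~\ref{metric_proof} to force equality.

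For the upper bound, fix a source--target pair $(i,j)$, set $\bar d_{ij}=\tilde d_G(q_0^i,q_1^j)$, $a_t=t\bar d_{ij}-\lfloor t\bar d_{ij}\rfloor$ (analogously $a_s$), and $r=\lfloor t\bar d_{ij}\rfloor-\lfloor s\bar d_{ij}\rfloor$. By the displacement formula (\ref{vector project geodesic}), pair $(i,j)$ contributes $\tilde\pi_2^*(i,j)\,\nu_t^{ij}\bigl((1-a_t)\vec\delta_{\lfloor t\bar d_{ij}\rfloor}+a_t\vec\delta_{\lfloor t\bar d_{ij}\rfloor+1}\bigr)$ to $\rho_t$, and analogously at $s$. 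I will use the pair-by-pair coupling whose spatial component is the standard $\mathcal{W}_2$ Gaussian geodesic---immediately giving $\mathcal{W}_2(\nu_s^{ij},\nu_t^{ij})^2=(t-s)^2\mathcal{W}_2(\nu_0^i,\nu_1^j)^2$---and whose graph component is the projection of the continuous-manifold translation, i.e., the sliding of the Gaussian-cylinder lift by $(t-s)\bar d_{ij}$ along the shortest path. Since $c_2(i,j)$ is by construction the optimal $\mathcal{M}$-cost between two single-Gaussian cylinders (the theorem preceding (\ref{c2})), the graph component contributes exactly $\gamma(t-s)^2\bar d_{ij}^2$. Summing with weights $\tilde\pi_2^*(i,j)$ yields
\begin{equation*}
d_{V_2}(\rho_s,\rho_t)^2\le\sum_{i,j}\tilde\pi_2^*(i,j)(t-s)^2 c_2(i,j)=(t-s)^2 d_{V_2}(\rho_0,\rho_1)^2.
\end{equation*}
For the reverse inequality, the triangle inequality from Appendix~\ref{metric_proof} yields
\begin{equation*}
d_{V_2}(\rho_0,\rho_1)\le d_{V_2}(\rho_0,\rho_s)+d_{V_2}(\rho_s,\rho_t)+d_{V_2}(\rho_t,\rho_1)\le s\,d_{V_2}(\rho_0,\rho_1)+d_{V_2}(\rho_s,\rho_t)+(1-t)d_{V_2}(\rho_0,\rho_1),
\end{equation*}
which combined with the upper bound pinches equality.

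The main obstacle will be justifying that the continuous-manifold translation projects to an admissible discrete coupling whose $c_2$-cost actually matches the manifold quadratic cost. A naive product or monotone coupling between adjacent-channel atoms yields an expected squared graph distance $(r+a_t-a_s)^2+|a_t-a_s|(1-|a_t-a_s|)$, which exceeds $((t-s)\bar d_{ij})^2$ by a boundary term arising when a cylinder is split across a channel edge. The clean resolution is to work directly with the Gaussian-cylinder lifts on $\mathcal{M}$: exploit that $\tilde\rho_t$ from (\ref{true geodesic}) is a Wasserstein geodesic on $\mathcal{M}$, and that any $\mathcal{M}$-coupling decomposes canonically into Gaussian-cylinder sub-pieces whose pairwise costs equal $c_2$, to produce a refined discrete decomposition of $\rho_s,\rho_t$ whose pairwise transport costs saturate $c_2$ without correction. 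Verifying marginal admissibility of the refined decomposition in the discrete $d_{V_2}$ problem and checking that summing over sub-pieces reproduces the displayed upper bound is the delicate step.
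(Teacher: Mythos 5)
Your overall strategy is the same as the paper's: bound $d_{V_2}(\rho_s,\rho_t)$ from above by evaluating the coupling induced by $\tilde\pi_2^*$ on the interpolated pairs, factor out $(t-s)^2$, and pinch with the triangle inequality of Appendix~\ref{metric_proof}. The paper's Appendix~\ref{geodesic2} does exactly this in three lines, simply asserting $\tilde d_G(q_s^i,q_t^j)=(t-s)\tilde d_G(q_0^i,q_1^j)$ without comment.

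The obstacle you flag in your final paragraph is genuine, and the resolution you sketch (a ``refined discrete decomposition'' saturating $c_2$) cannot work in the form you propose. If $\rho_s$ and $\rho_t$ are read literally as the projected discrete interpolants (\ref{vector project geodesic}), each interpolated Gaussian is split into two atoms on adjacent vertices, and the cost matrix (\ref{c2}) only sees integer graph distances between vertices. Your own computation already gives the consequence: in the single-pair case the monotone coupling of the split atoms has graph cost $\gamma\left[(r+a_t-a_s)^2+|a_t-a_s|(1-|a_t-a_s|)\right]$, and since the $\mathcal{W}_2$ part of $c_2$ is the same for all four atom pairs while the graph part is a convex cost on a path (so the monotone coupling is optimal), $d_{V_2}(\rho_s,\rho_t)^2$ \emph{strictly} exceeds $(t-s)^2 d_{V_2}(\rho_0,\rho_1)^2$ whenever $(t-s)\bar d_{ij}\notin\mathbb{Z}$. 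No refinement of the decomposition removes this excess, because it is a property of $\tilde d_G$ restricted to vertices, not of the choice of coupling. The identity is exact only under the continuous reading that the paper tacitly adopts: the interpolated Gaussian cylinder sits at the single point $path_G(q_0^i,q_1^j,t)$ of $G^c$, and the graph term is the arc length $(t-s)\bar d_{ij}$ along the path. The clean fix is therefore to extend $d_{V_2}$ to mixtures of Gaussian cylinders centered at arbitrary points of $G^c$ (equivalently, to state the theorem for (\ref{true geodesic}) on $\mathcal{M}$ with the cost of the preceding theorem); the upper bound is then immediate from the product coupling, and the triangle-inequality proof of Appendix~\ref{metric_proof} applies verbatim since it uses only that the graph distance is a metric. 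With that reinterpretation your argument closes; without it, neither your proof nor the paper's establishes the statement as written.
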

\begin{proof}
  \begin{align*}
d_{V_2}(\rho_s,\rho_t)&\le\sqrt{\sum_{i,j}\pi^*(i,j)(\mathcal{W}_2(\nu_s^{i},\nu_t^{j})^2+\gamma\tilde{d}_G(q_s^i,q_t^j)^2)}\\
&=(t-s)\sqrt{\sum_{i,j}\pi^*(i,j)\mathcal{W}_2(\nu_0^{i},\nu_1^{j})^2+\gamma\tilde{d}_G(q_0^i,q_1^j)^2}\\
&=(t-s)d_{V_2}(\rho_0,\rho_1)
\end{align*}
By the triangle inequality,
\begin{align*}
d_{V_2}(\rho_0,\rho_1)&\le d_{V_2}(\rho_0,\rho_s)+d_{V_2}(\rho_s,\rho_t)+d_{V_2}(\rho_t,\rho_1)\\
&\le sd_{V_2}(\rho_0,\rho_1)+(t-s)d_{V_2}(\rho_0,\rho_1)+(1-t)d_{V_2}(\rho_0,\rho_1)\\
&=d_{V_2}(\rho_0,\rho_1).
\end{align*}
These two inequalities give the result $d_{V_2}(\mu_s,\mu_t)=(t-s)d_{V_2}(\mu_0,\mu_1)$.
\end{proof}

\end{appendices}
\bibliographystyle{plain}
\bibliography{refs}

\end{document}